\theoremstyle{plain}
\newtheorem{theorem}{Theorem}[section]
\newtheorem{lemma}[theorem]{Lemma}
\theoremstyle{definition}
\theoremstyle{remark}
\newcommand{\statespace}{\mathcal{S}}
\newcommand{\actionspace}{\mathcal{A}}
\newcommand{\obsfunc}{\mathcal{O}}
\newcommand{\obsspace}{\Omega}
\newcommand{\histspace}{\mathcal{H}}
\newcommand{\agentset}{\mathcal{N}}
\newcommand{\corrset}{\mathcal{C}}
\newcommand{\levelset}{\mathcal{G}}
\newcommand{\EE}{\mathbb{E}}
\newcommand{\RR}{\mathbb{R}}
\newcommand{\Var}{\mathrm{Var}}
\newcommand{\Cov}{\mathrm{Cov}}
\newcommand{\mixer}{f_\text{Mix}}
\newcommand{\stopgrad}{\text{sg}}
\begin{document}

%
\runningtitle{Multi-level Advantage Credit Assignment}

%

\twocolumn[

\aistatstitle{Multi-level Advantage Credit Assignment\\for Cooperative Multi-Agent Reinforcement Learning}

\aistatsauthor{ Xutong Zhao \And Yaqi Xie }

\aistatsaddress{ Mila - Quebec AI Institute\\~Polytechnique Montr\'eal \And School of Computer Science,\\Carnegie Mellon University } ]

\begin{abstract}
Cooperative multi-agent reinforcement learning (MARL) aims to coordinate multiple agents to achieve a common goal. A key challenge in MARL is credit assignment, which involves assessing each agent's contribution to the shared reward. Given the diversity of tasks, agents may perform different types of coordination, with rewards attributed to diverse and often overlapping agent subsets. In this work, we formalize the credit assignment level as the number of agents cooperating to obtain a reward, and address scenarios with multiple coexisting levels. We introduce a multi-level advantage formulation that performs explicit counterfactual reasoning to infer credits across distinct levels. Our method, Multi-level Advantage Credit Assignment (MACA), captures agent contributions at multiple levels by integrating advantage functions that reason about individual, joint, and correlated actions. Utilizing an attention-based framework, MACA identifies correlated agent relationships and constructs multi-level advantages to guide policy learning. Comprehensive experiments on challenging Starcraft v1\&v2 tasks demonstrate MACA's superior performance, underscoring its efficacy in complex credit assignment scenarios.

\end{abstract}

\section{INTRODUCTION}
\looseness-1
Multi-agent systems (MAS) are inherently applicable to a broad spectrum of real-world applications, ranging from smart grid \citep{roesch2020smart} to swarm robotics \citep{huttenrauch2017guided}, and autonomous vehicles \citep{shalev2016safe}.
Cooperative multi-agent reinforcement learning (MARL) emerges as a general learning framework for MAS with a common objective among agents.
Naively applying single-agent RL methods by regarding multiple agents as one single agent with exponentially large joint action space is limited by its scalability, and potentially leads to non-stationary learning.
A key challenge in MARL is multi-agent credit assignment, the task of identifying the contribution of each individual agent's action to the global reward \citep{albrecht2023multi}.
Consider a warehouse task where agents receive collective rewards for moving objects.
In this scenario, one agent \textsl{A} might collaborate with agents \textsl{B} and \textsl{C} to carry a fridge, while another agent, \textsl{D}, does nothing. 
Disentangling each agent's contribution to the total collective reward is crucial to promoting effective cooperation strategies.
However, credit assignment is highly non-trivial given only the joint actions and shared rewards.
Real-world scenarios often involve more complex multi-agent interactions, rendering credit assignment more challenging.


Addressing the multi-agent credit assignment challenge efficiently and scalably remains an open question.
Prior efforts to address it can be mainly categorized into: (1) implicit methods, such as QMix~\citep{rashid2018qmix}, which learns a mixing network to decompose joint values, and (2) explicit methods, such as COMA~\citep{foerster2018counterfactual}, which performs counterfactual reasoning to guide the learning of individual policies. 
However, both lines of works only consider cooperation among a certain fixed number of agents and overlook contributions from different or overlapping agent subsets.

\looseness-1
Inspired by the study of human collaboration in cognitive neuroscience~\citep{richerson2016cultural}, we propose a \textit{multi-level} formulation to model the credit assignment in multi-agent cooperation.
This approach considers the general case where each agent collaborates with distinct subsects of agents.
We formalize the type of cooperation into distinct \textit{levels}, where each level corresponds to the number of agents needed to obtain the reward (detailed in \cref{sec:ca-level}).
For example, in the warehouse scenario described above, agent \textsl{A} is involved in a $3$-level cooperation. Additionally, agent \textsl{A} may simultaneously carry a backpack, resulting in both $1$-level and $3$-level cooperations.
This coexistence of \textit{multi-level} cooperation underscores the necessity of our \textit{multi-level} credit assignment formulation, designed to explicitly recognize and assign credit across different levels of cooperation.
More specifically, we propose a $k$-level counterfactual credit assignment formulation, further extending it to a general multi-level credit assignment to accommodate coexisting levels.
Inspired by \citet{foerster2018counterfactual}, we introduce a $k$-level counterfactual advantage function.
The advantage is defined as the discrepancy between the state-action value of the taken joint action and a counterfactual baseline, which marginalizes out the $k$ agents' actions by their policies while keeping other agents' actions fixed.
This formulation \textit{explicitly} performs counterfactual reasoning that deduces the joint contribution by the $k$-agent subset.
Accounting for circumstances with coexisting different levels, we extend $k$-level credit assignment to multi-level credit assignment by amalgamation of multiple $k$-level advantages with different $k$'s.
Since each $k$-level advantage function is tailored to a specific level, the multi-level advantage captures contributions across different levels.
\looseness-1
We then introduce Multi-level Advantage Credit Assignment (MACA), an actor-critic based approach to address the multi-agent credit assginment challenge based on our \textit{multi-level} counterfactual credit assignment formulation.
MACA's multi-level advantage is constructed by three $k$-level advantages that respectively attribute contributions to individual actions, joint actions, and actions taken by the subset of strongly correlated agents.
Although tasks may involve a variety of credit assignment levels, MACA focuses on the most significant ones. 
It adapts the transformer encoder architecture \citep{vaswani2017attention}, leveraging its attention mechanism to model correlations among agents \citep{zhang2022relational}, which is further used to construct the multi-level counterfactual advantage.
The \textit{dynamically} determined agent correlation serves as a basis for adaptive credit assignment at different states, which promotes effective contribution estimation and efficient learning across various scenarios.

\looseness-1
We evaluate MACA on two popular cooperative MARL benchmarks, the StarCraft Multi-Agent Challenge (SMAC) \citep{samvelyan2019starcraft}, and the newly extended SMACv2 benchmark, which introduces significantly greater stochasticity ~\citep{ellis2022smacv2}.
Tasks in both benchmarks cover various credit assignment types, making them ideal for testing multi-agent credit assignments.
Our empirical findings highlight MACA's ability to enhance performance effectively and robustly, particularly in the more demanding contexts of the SMACv2 tasks. 
Additionally, ablation studies showcase that every single level is essential in MACA's multi-level advantage.
Theoretical analysis establishes an in-depth understanding of the strong performance.


Our main contributions are summarized as follows:
\begin{itemize}
    \item Inspired by human collaboration, we propose a multi-level credit assignment formulation that recognizes and assigns credit to individual agents across different levels of collaboration.
    \item Based on the formulation, we present MACA, an actor-critic approach that addresses the multi-agent credit assignment challenge by leveraging multi-level counterfactual advantage functions to capture contributions of individual actions, joint actions, and actions by strongly correlated agents.
    \item Experiments on challenging Starcraft benchmarks demonstrate MACA outperforms previous state-of-the-art. 
    Notably, it shows more significant advantages in tasks of higher complexity.
\end{itemize}

\section{RELATED WORK}


Recent works on multi-agent credit assignment can be categorized into two lines of approaches based on the utilized assignment mechanism:
(1) \textit{implicit methods}, which implicitly learns agent contribution via a decomposition of the joint value function, and
(2) \textit{explicit methods}, which rely on some explicit mechanism to infer agents' contributions.

\paragraph{Implicit Credit Assignment}
\looseness-1
Most prior works perform implicit credit assignment by learning a decomposition of the joint value function to per-agent value functions \citep{sunehag2017value,rashid2018qmix,son2019qtran,wang2020qplex}.
VDN \citep{sunehag2017value} assumes linearity and obtains the joint value by summing over all individual values.
QMix \citep{rashid2018qmix} lifts the linearity assumption by learning a mixing neural network, but its monotonicity constraints on the mixing weights also limit its expressiveness.
HAPPO \citep{kuba2021trust} is based on the multi-agent advantage decomposition and employs a sequential policy update scheme, but the computational complexity of sequential updates limits its scalability.

\paragraph{Explicit Credit Assignment}
\looseness-1
One exemplar family calculates \textit{difference rewards} against a reward baseline \citep{tumer2007distributed,wolpert2001optimal}.
COMA \citep{foerster2018counterfactual} is an actor-critic (AC) method implementing this idea through a counterfactual advantage baseline.
However, it treats each agent as an independent entity without considering interactions within the group, thus can be inefficient in learning complex cooperative behaviours \citep{papoudakis2020benchmarking,li2021shapley}.
Other AC methods such as MAPPO \citep{yu2021surprising} and MAA2C \citep{papoudakis2020benchmarking} share a similar formulation, but their advantage is only concerned with joint actions' contribution, without distinguishing individual agents' credits. 
Limited by their credit assignment mechanism, these AC methods only deduce contributions of the same set of agents, ignoring the potential situations where one agent may contribute through different or multiple overlapping agent subsets.

\looseness-1
Other representative methods leverage the Shapley Value \citep{shapley1953value},  but its computational complexity grows factorially with the number of agents \citep{kumar2020problems}, handicapping efficient training in practice.
\citet{li2021shapley} seeks to reduce computations by approximating Shapley Value through Monte Carlo sampling, but this formulation sacrifices certain properties of Shapley Value, such as symmetry.
Other explicit methods employ reward shaping.
\citet{li2022difference} applies potential-based difference rewards, but learning its reward model introduces more computational cost compared to model-free approaches.

\paragraph{Attention in MARL}
\looseness-1
Adopting transformers or attention mechanisms \citep{vaswani2017attention} to MARL has grown in popularity in recent years.
\citet{meng2021offline} extends the Decision Transformer \citep{chen2021decision} to the offline MARL regime.
\citet{baker2019emergent} uses an attention-based architecture to learn permutation-invariant state representation and demonstrates self-supervised autocurriculum in the hide-and-seek task.
\citet{seraj2022learning} constructs a heterogeneous graph-attention network architecture to promote efficient multi-agent communication.
\citet{nayak2023scalable} also use the attention mechanism for inter-agent communication, seeking to tackle the 2D navigation problem.
However, few works have leveraged attention-captured correlations to address the credit assignment challenge in cooperative MARL.



\section{PRELIMINARIES}\label{sec:preliminary}
\subsection{Dec-POMDP}
We consider the cooperative multi-agent setting as a decentralized partially observable Markov decision process (Dec-POMDP) \citep{oliehoek2016concise}, which is formally defined as a tuple $G = \langle \agentset, \statespace, \actionspace, P, R, \gamma, \obsfunc, \obsspace \rangle$
, where $\agentset = \{1, \dots, n\}$ is the set of agents, $\statespace$ is the state space, $\actionspace = \actionspace_1 \times \cdots \times \actionspace_n$ is the joint action space, $P: \statespace \times \actionspace \times \statespace \rightarrow [0,1]$ is the transition probability, $R: \statespace \times \actionspace \times \statespace \rightarrow \RR$ is the global reward function, $\gamma \in [0,1]$ is the discount factor, $\obsspace = \obsspace_1 \times \cdots \times \obsspace_n$ is the joint observation space, $\obsfunc: \statespace \times \agentset \rightarrow \obsspace$ is the observation function.
At each timestep $t$, agents are in a state $s^t \in \statespace$.
Each agent $i \in \agentset$ observes a local observation $o^t_i = \obsfunc(s^t, i) \in \obsspace_i$, and selects an action $a^t_i \in \actionspace_i$.
All actions together form the joint action $a^t=[a^t_i]^n_{i=1}$.
The environment transitions to the next state $s^{t+1} \sim P(s'|s^t, a^t)$, and outputs a global reward $r \in R$ shared across all agents.

\looseness-1
Let $\histspace_i = (\obsspace_i \times \actionspace_i)^* \times \obsspace_i$ represent the space of agent $i$'s action-observation history.
A trajectory is a full joint history sequence $\tau = \langle o^t, a^t \rangle^\infty_{t=0}$.
We use $\pi = \pi_1 \times \cdots \times \pi_n$ to denote the joint policy, where each $\pi_i(a'_i|h^t_i): \histspace_i \times \actionspace_i \rightarrow [0,1]$ is agent $i$'s policy distribution that samples its action at every timestep.
The marginal state distribution $d^\pi$ is induced by the policy $\pi$ and the transition probability $P$.
For the simplicity of notations, in subsequent sections we assume the Markov property and full observability, and we may ignore superscripts/subscripts if the context is unambiguous.
We use $-i$ to denote all agents except agent $i$.

\looseness-1
The discounted return is defined as $G^t = \sum^\infty_{k=0} \gamma^k r^{t+k}$.
Based on agents' joint policy, the state-value function and action-value function are defined as $V_\pi(s) = \EE_{s^{1:\infty} \sim d^\pi,a^{0:\infty} \sim \pi} \left[ G^0 | s^0=s \right]$ and $Q_\pi(s, a) = \EE_{s^{1:\infty} \sim d^\pi,a^{1:\infty} \sim \pi} \left[ G^0 | s^0=s,a^0=a \right]$, respectively.
By definition $V_\pi(s) = \EE_{a \sim \pi} \left[ Q_\pi(s, a) \right]$.
The advantage function is defined as $A_\pi(s, a) = Q_\pi(s, a) - V_\pi(s)$.
The MARL objective is to find a joint policy that maximizes the expected return $J(\theta) = \EE_\tau \left[ G^0 \right]$.

\subsection{Policy Gradient Methods}
\label{sec:pg-methods}
In RL, policy gradient (PG) methods optimize the policy $\pi$ by performing gradient updates on the objective $J(\theta)$, where $\pi$ is parameterized by $\theta$.
The Policy Gradient Theorem \citep{sutton1999policy} provides the gradient with respect to $\theta$ as $\nabla_\theta J(\theta) = \EE_{s \sim d^\pi, a \sim \pi} \left[ Q(s, a) \nabla_\theta \log \pi_\theta(a|s) \right]$.
The MAPG theorem \citep{foerster2018counterfactual,zhang2018fully} extends the single-agent policy gradient to
\begin{align}\label{eq:ma-pg}
    g_{\theta_i} &= \nabla_{\theta_i} J(\theta) = \EE_{s \sim d^\pi, a \sim \pi} \left[ Q(s, a) \nabla_{\theta_i} \log \pi_i(a_i|s) \right]
\end{align}

\looseness-1
In the AC framework, a critic learns the value function, constructing the gradient to update the actor, i.e. the policy.
A popular variance-reduction approach is to subtract the $Q$-value estimate by a baseline function $b$ \citep{weaver2013optimal}.
One common choice $b(s) = V(s)$ gives the advantage function $Q(s,a)-V(s)$ by definition.
We can show that any action-independent baseline function preserves unbiasedness of the gradient estimate in expectation, i.e., $\EE_{s \sim d^\pi, a \sim \pi} \left[ b_i \nabla_{\theta_i} \log \pi_i(a_i|s) \right] =0$
for any agent $i \in \agentset$ if $b_i$ does not depend on $a_i$ (see 
\cref{app:theoretical-results} 
for proof).

In this work, we follow the centralized training with decentralized execution (CTDE) \citep{bernstein2002complexity} MARL learning paradigm.
Extending AC approaches to MARL, the CTDE formulation is usually centralized critic and decentralized actors, where the critic learns the value function based on the global state information (and potentially joint actions), which is used to optimize the policies following \cref{eq:ma-pg}.

\subsection{Credit Assignment Level}\label{sec:ca-level}
The concept of credit assignment level is inspired by the coordination level in MARL tasks \citep{liu2022stateful}.
Let $\levelset \subset \agentset$ denote a subset of $|\levelset| = k$ agents.
Different subsets are allowed to overlap.
At each timestep $t$, we use $r_\levelset(s^t, a^t) = r_\levelset(s^t, a^t_\levelset)$ to denote the joint reward that can only be obtained if the subset of $k$ agents cooperate, where $a^t_\levelset = \{a^t_i | i \in \levelset \}$.
We then represent the global reward as the sum of rewards contributed by all subsets of agents:
$r(s^t, a^t) = \sum_{\levelset \subset \agentset} r_\levelset(s^t, a^t)$.
We consider the number of agents $k$ as a credit assignment level.
We then consider credit assignment and associated contributions with different levels as different types.
This formalism intends to cover complex situations that involve an individual agent in different credit assignment types across timesteps, or even coexisting different types at the same timestep.


\section{METHOD}\label{sec:method}
\looseness-1
In this section, we introduce MACA, an actor-critic MARL method that \textit{explicitly} models multiple levels of credit assignment. 
This approach delineates three distinct advantage functions, each designed to assess contributions from individual actions, joint actions, and actions taken by strongly correlated partners. 
With multi-level contributions encoded by the combination of these advantages, MACA provides a comprehensive framework for addressing multi-agent credit assignment challenges, accommodating the diverse range of interactive behaviors among agents. 


\subsection{Multi-Level Counterfactual Formulation}\label{sec:maca-adv-baseline}

\paragraph{$k$-Level Advantage}
Motivated by \citet{foerster2018counterfactual},
we consider the advantage in the MAPG estimator as an approach to counterfactually assessing agents' contributions.
With a centralized critic learning $Q(s,a)$, when we compute the advantage function $A_i(s,a)$ for each agent $i$, the actions marginalized out in the $Q$-based baseline correspond to the agents whose contributions we reason about.
In particular we can write a $k$-level counterfactual baseline 
\begin{align}\label{eq:k-level-baseline}
    b^\textsc{CF}_i(s,a) = \EE_{a_{\levelset_i}} \left[ Q(s, a) \right]
\end{align}
where $a_{\levelset_i} = \{a_j \sim \pi_j: j \in \levelset_i \}$ and $\levelset_i$ is an arbitrary subset of $|\levelset_i|=k$ agents that satisfies $\{i\} \subset \levelset_i \subset \agentset$.
The resulting advantage $A_i(s,a)=Q(s,a)-b^\textsc{CF}_i(s,a)$ provides a general form of counterfactual advantage that explicitly reasons about credit assignment by the $k$-agent subset.
For instance, 
COMA computes the baseline $b^\textsc{Ind}_i(s, a) = \EE_{a_i \sim \pi_i}[Q(s, a)]$,
thereby reasoning about the \textit{individual action}'s contribution.
MAPPO/MAA2C estimates the baseline $b^\textsc{Jnt}_i(s, a) = V(s) = \EE_{a \sim \pi}[Q(s,a)]$.
Intuitively, this advantage evaluates how much the \textit{joint action} is better or worse than the default behaviour.
As all actions are marginalized out, they do not differentiate each particular agent's contribution.
It is trivial to show that the $b^\textsc{CF}_i$ encompasses both $b^\textsc{Ind}_i$ and $b^\textsc{Jnt}_i$.

\looseness-1
While the minimum-variance baseline $b^*_i$ (see 
\cref{thm:min-var-baseline}
) is usually infeasible to compute in complex MDPs, our $k$-level counterfactual baseline $b^\textsc{CF}_i$ also reduces the variance of MAPG estimates and assists learning in practice.
The rational is clear when each $k$-level baseline is expressed as $b^\textsc{CF}_i = b^*_i - \frac{\Cov \left( Q, || \nabla_i \log \pi_i ||^2 \right)}{\EE \left[ || \nabla_i \log \pi_i ||^2 \right]}$ (see 
\cref{thm:min-var-vs-value}
).
The term $\Cov \left( Q, || \nabla_i \log \pi_i ||^2 \right)$ typically becomes negative as the gradient becomes smaller on actions with high returns during the optimization process, leading to the baseline being an optimistic baseline \citep{chung2021beyond},
which justifies our design choice.

We consider one important subset $\levelset_i$ whose agents are strongly correlated with agent $i$.
We name this subset the \textit{CorrSet} $\corrset_i$, and the corresponding baseline the \textit{CorrSet} baseline $b^\textsc{Cor}_i$.
We evaluate inter-agent correlations conditioned on the state information so that $b^\textsc{Cor}_i$ is capable of adapting dynamically to different credit assignment levels across timesteps.
We discuss how we determine $\corrset_i$ in detail in the following subsection.

\begin{figure*}[t]
\centering
\includegraphics[width=0.99\linewidth]{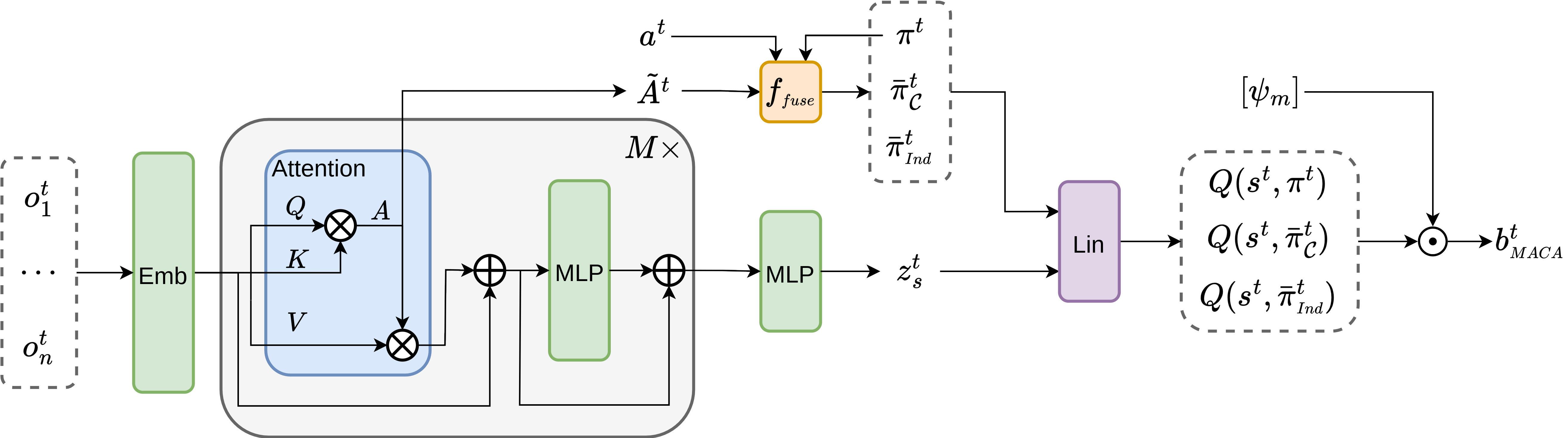}
\caption{MACA critic architecture. The sequence of agent observations $(o^t_1,\dots,o^t_n)$ inputs to an embedding layer and a self-attention encoder, and the output passes through an MLP layer to produce the state embedding $z_s$. The attention weight matrix $\Tilde{A}^t$, joint actions $a^t$, and joint policy distribution $\pi^t$ pass through the function $f_{fuse}$ to obtain action distributions $\bar{\pi}^t_{\corrset},\bar{\pi}^t_{Ind}$, corresponding to CorrSet and individual actions, respectively. The embedding $z_s$ and action distributions are fed to a linear layer to get respective Q values, and the coefficients $[\psi_m]$ weight them to obtain the final MACA baseline $b^t_{MACA}$.}
\label{fig:maca-architecture}
\end{figure*}

\paragraph{Multi-Level Advantage}
\looseness = -1
As discussed in previous sections, we consider the situation where one agent's contribution may simultaneously involve a mixture of \textit{multiple} credit assignment levels.
We aim to reason about each of them using a different $k$-level advantage function, and combine them into one advantage, which we call the \textit{multi-level} counterfactual advantage.
To obtain such an advantage, due to the same action value function used in all advantage functions, equivalently we only need to compute corresponding $k$-level baselines and a combination of them.
In particular, for each agent $i$, we compute the three most important $k$-level baselines: (1) $k=n$: the joint action set baseline $b^\textsc{Jnt}$, (2) $k=1$: the individual action set baseline $b^\textsc{Ind}_i$, and 
(3) $k=|\corrset_i|$: a \textit{CorrSet} baseline $b^\textsc{Cor}_i$.
In principle, a multi-level advantage does not limit the number of $k$-level advantage components.

\looseness = -1
We then obtain our MACA baseline by integrating these three baselines by a weighted sum
\begin{align}
    b^\textsc{MACA}_i &= \psi^\textsc{Jnt}_i b^\textsc{Jnt} + \psi^\textsc{Ind}_i b^\textsc{Ind} + \psi^\textsc{Cor}_i b^\textsc{Cor} \label{eq:maca-baseline} \\
    A^\textsc{MACA}_i &= Q(s,a)-b^\textsc{MACA}_i \label{eq:maca-advantage}
\end{align}
where the weighting coefficients $[\psi^m]_{m \in [3]} \in \Delta(3)$ are state-dependent.
Hence the resulting advantage function $A^\textsc{MACA}_i$ reasons different credit assignment levels via its multi-level baselines.
We discuss how we learn the coefficients using stochastic optimization in the following subsection.
As each of the $k$-level baselines marginalizes out the action $a_i$, the resulting MACA baseline is also independent of $a_i$, which preserves unbiasedness in policy gradient estimates (see 
\cref{thm:baseline-unbiasedness}
).
This property ensures MACA advantage is generally compatible with the MAPG framework and applicable to various algorithms.
\cref{thm:suboptimality} 
derives the convergence of MACA to a local optimal policy, by following the convergence proof of single-agent actor-critic \citep{sutton1999policy,konda1999actor} subjecting to the same assumptions.
We hereby provide a proof sketch. The detailed proof is deferred to 
\cref{app:theoretical-results}.

\looseness-1
\textit{Proof sketch of 
\cref{thm:suboptimality}.
}
Since each $k$-level baseline marginalizes out the action $a_i$, the multi-level baseline as a linear combination of all $k$-level baselines is independent of $a_i$.
By 
\cref{thm:baseline-unbiasedness} 
it does not introduce bias to MAPG estimates. 
Therefore the multi-level baseline does not affect the convergence in expectation. 
Writing the joint policy as the product of individual policies, the remaining proof directly follows \citet{konda1999actor}.

\begin{table*}[t] 
\renewcommand{\arraystretch}{1.5} 
\centering
\small
\caption{Overview of advantage functions.}
\label{tab:algo-overview}
\begin{tabular}{*{3}{l}}
\toprule
\multicolumn{1}{l}{Algorithm} & \multicolumn{1}{l}{Update} & \multicolumn{1}{l}{Advantage function $A_i(s, a)$} \\
\midrule
COMA & Simultaneous & $Q(s, a) - \EE_{a_i \sim \pi_i} \left[ Q(s, a) \right]$ \\
MAPPO & Simultaneous & $Q(s, a) - \EE_{a \sim \pi} \left[ Q(s, a) \right]$ \\
IPPO & Simultaneous & $Q_i(s, a_i) - \EE_{a_i \sim \pi_i} \left[ Q_i(s, a_i) \right]$ \\
PPO-Sum & Simultaneous & $Q_i(s, a_i) - \EE_{a_i \sim \pi_i} \left[ Q_i(s, a_i) \right]; Q = \sum_i Q_i$ \\
PPO-Mix & Simultaneous & $Q_i(s, a_i) - \EE_{a_i \sim \pi_i} \left[ Q_i(s, a_i) \right]; Q = \mixer([Q_i]), \frac{\partial Q}{\partial Q_i} \geq 0$ \\
HAPPO & Sequential & $( \prod_{j=1}^{i-1} \frac{\bar{\pi}_j(a_j|s)}{\pi_j(a_j|s)} ) \left( Q(s, a) - \EE_{a \sim \pi} \left[ Q(s, a) \right] \right)$ \\
\midrule
MACA-\textsc{Cor} (ours) & Simultaneous & $Q(s, a) - \EE_{a_{\corrset_i}} \left[ Q(s, a) \right]; \{i\} \subset \corrset_i \subset \agentset$ \\
\multirow{2}{*}{MACA (ours)} & \multirow{2}{*}{Simultaneous} & $Q(s, a) - \left(\psi^1 \EE_{a \sim \pi} \left[ Q(s, a) \right] + \psi^2 \EE_{a_i \sim \pi_i} \left[ Q(s, a) \right] \right. $ \\
& & $\left. + \psi^3 \EE_{a_{\corrset_i}} \left[ Q(s, a) \right] \right); \{i\} \subset \corrset_i \subset \agentset, [\psi^m] \in \Delta(3)$ \\
\bottomrule
\end{tabular}
\end{table*}

\subsection{Attention-based Framework}\label{sec:critic-architecture}
In this section, we introduce our learning framework that constructs the MACA advantage function.
We discuss how we utilize the self-attention mechanism to quantify agent-wise relative correlation and construct the \textit{CorrSet}.
We then present how we perform optimization to learn policies, values, and parameters that compute the MACA baseline.
The MACA architecture is illustrated in \cref{fig:maca-architecture}.



\paragraph{Critic Encoder Architecture}
\looseness-1
The critic leverages the multi-agent transformer encoder \citep{vaswani2017attention,wen2022multi}.
It inputs the sequence of agents' observations $(o^t_1, \dots, o^t_n)$ to an embedding layer and $M$ encoding blocks.
Each block consists of a self-attention mechanism, a multi-layer perceptron (MLP) layer, and residual connections and layer normalizations \citep{ba2016layer} to avoid gradient vanishing.
The output passes through an MLP layer to produce a state embedding $z_s$.
We compute the attention rollout weights \citep{abnar2020quantifying} of the last layer, denoted as $\Tilde{A}$ (referred to as attention weights for simplicity).

\paragraph{Value Estimation}
We aim to compute any $k$-level counterfactual baseline $b^\textsc{CF}_i(s,a)$.
To improve scalability and flexibility, we adopt a design that inputs the joint \textit{action distribution parameters} \citep{wierstra2007policy} by $b^\textsc{CF}_i(s,a) = \EE_{a_{\levelset_i}} \left[ Q(s, a) \right] = Q_{\phi}(s, \bar{\pi}_{\levelset_i})$,
where $\bar{\pi}_{\levelset_i} = \EE_{a_{\levelset_i}}[a] \in \RR^{|\actionspace|}$ is the action/policy distribution, and $Q_{\phi}$ is the value approximation parameterized by $\phi$. 
Regarding per-agent policies in $\bar{\pi}_{\levelset_i}$, the marginalized actions $\{a_j: j \in \levelset_i\}$ correspond to the original policies, and all other policies are one-hot with the probability of taken actions equal to 1.
We overabuse the same notation to denote the policy distribution whose joint action and individual action are marginalized out by $\bar{\pi}^\textsc{Jnt}_i$ and $\bar{\pi}^\textsc{Ind}_i$, respectively.
Note that $\bar{\pi}^\textsc{Jnt}_i = \pi$, the original joint policy.

\looseness-1
It is worth noting that in general $\EE_{a_{\levelset_i}}[Q(s, a)] \neq Q(s, \bar{\pi}_{\levelset_i})$.
While recent work has ignored this issue \citep{zhou2020learning}, we seek a sound solution that ensures equality. 
As per Jensen's inequality \citep{jensen1906fonctions} we feed the state embedding $z_s$ and $\bar{\pi}_{\levelset_i}$ through a \textit{linear} layer $b^\textsc{CF}_i(s,a) = \textrm{Lin}(z_s, \bar{\pi}_{\levelset_i})$.
To validate this layer's expressivity, we investigate a transformer-decoder modeling option in ablation experiments in \cref{sec:experiment-ablation}.

\paragraph{\textit{CorrSet} Construction}
We utilize inter-agent relationships represented by the pre-computed attention weights $\Tilde{A}$ to infer the \textit{CorrSet} $\corrset_i$.
Note that this component only relies on linear computations.

For each agent $i$, $\Tilde{A} \in \RR^n$ contains weights associated with all $n$ agents, each of which is denoted by $\Tilde{A}_{i,j}, j \in \agentset$.
The self-attention mechanism reasons correlations among input tokens \citep{zhang2022relational}. 
Intuitively, higher attention weights indicate stronger correlations between input tokens, and vice versa.
Hence to identify the subset of strongly correlated agents, we find agents with high attention weights -- i.e., 
let $j \in \corrset_i$ if $\Tilde{A}_{i,j} \geq \sigma$, where $\sigma \in [0,1]$ is a thresholding hyperparameter. 
To preserve unbiasedness in gradient estimates, we additionally enforce $i \in \corrset_i$ regardless of $\Tilde{A}_{i,i}$ as per definition of $k$-level baseline.
With the \textit{CorrSet} $\corrset_i$ settled, 
we can follow the procedure described above to compute all baselines ($b^\textsc{Jnt},b^\textsc{Ind}_i,b^\textsc{Cor}_i$) and further the MACA advantage. 

\paragraph{Training Losses}\label{para:training}
We follow the standard actor-critic training paradigm.
We optimize the actors through the MAPG as in \cref{eq:ma-pg}.
We update value function parameters $\phi$ on-policy by optimizing the mean-squared TD error over collected trajectories:
\begin{align*}
    L_{V}(\phi) &= \EE_\tau \left[ || V(s^t) - \stopgrad\left( r^t + \gamma V(s^{t+1}) \right) ||^2_2 \right] \\ 
    L_{Q}(\phi) &= \EE_\tau \left[ || Q(s^t, a^t) - \stopgrad\left( r^t + \gamma Q(s^{t+1}, a^{t+1}) \right) ||^2_2 \right] 
\end{align*}
where $V(s^t) = Q(s^t, \pi)$ as discussed above, and $\stopgrad(\cdot)$ indicates the stop-gradient operation.

\looseness-1
To learn the weighting coefficients $[\psi^\textsc{Jnt}_i,\psi^\textsc{Ind}_i,\psi^\textsc{Cor}_i]$ involved in the computation of MACA advantage (\cref{eq:maca-advantage}), it is important to realize they indirectly affect performance improvement.
Since policy gradient updates only rely on the value of advantage functions, the MAPG objective is indifferentiable with respect to coefficients $\psi$'s.
Similarly the TD updates performed on value functions do not optimize the weights either.
We hence consider the stochastic optimization setting.
We compute each coefficient by a linear layer $Lin(z_s; \eta)$ parameterized by $\eta$, and apply a softmax over all coefficients to obtain a probability distribution.
We leverage the CMA-ES method \citep{hansen2019pycma} to optimize the performance difference between policies after and before policy updates using MACA advantages, that is, $L(\eta) = - \EE_\tau[R(\theta^{n+1}) - R(\theta^{n})]$, where $R(\theta)$ is the cumulative trajectory reward achieved by policy parameterized by $\theta$.




\section{EXPERIMENTS}\label{sec:experiment}
\looseness-1
In this section, we present empirical experiments and discuss the results.
We first describe the experimental setup.
Then we compare the performance of MACA and other state-of-the-art approaches.
We also present ablation studies to demonstrate the critical role of MACA's different components.

\begin{table*}[t]
\small
\centering
\caption{Mean evaluation win rate and standard deviation for different methods on SMAC v1\&v2 tasks. A win rate is marked in bold if it is within the critical region of the significance test.}
\label{tab:smac_win_rate}
\begin{adjustbox}{width=\textwidth,center}
\begin{tabular}{*{11}{c}}
\toprule
& Task & Task Type & MACA & MAPPO & IPPO & PPO-Mix & PPO-Sum & COMA & HAPPO & Steps \\
\midrule
\multirow{7}{*}{\rotatebox[origin=c]{90}{SMAC}}
& \textsl{25m} & homo. &$\bm{99.3}_{(0.1)}$ & $\bm{99.5}_{(0.3)}$ & $\bm{99.5}_{(0.5)}$ & $25.0_{(3.4)}$ & $68.0_{(14.9)}$ & $0.0_{(0.0)}$& $85.8_{(0.8)}$& 3e6 \\
& \textsl{5m\_vs\_6m} & homo. &$\bm{87.0}_{(2.0)}$ & $75.2_{(1.5)}$ & $78.0_{(0.9)}$ & $24.5_{(13.3)}$ & $67.0_{(3.2)}$ & $0.9_{(0.5)}$& $76.7_{(3.0)}$& 8e6 \\
& \textsl{8m\_vs\_9m} & homo. &$\bm{99.0}_{(0.6)}$ & $94.5_{(1.9)}$ & $95.0_{(1.4)}$ & $57.0_{(8.1)}$ & $66.0_{(13.2)}$ & $1.0_{(0.6)}$& $86.7_{(5.1)}$& 8e6 \\
& \textsl{10m\_vs\_11m} & homo. &$\bm{100.0}_{(0.0)}$ & $87.0_{(8.6)}$ & $98.8_{(0.7)}$ & $22.4_{(1.7)}$ & $69.0_{(16.7)}$ & $3.0_{(2.0)}$& $93.3_{(1.7)}$& 8e6 \\
& \textsl{3s5z} & hetero. &$\bm{99.2}_{(0.8)}$ & $96.5_{(0.9)}$ & $\bm{99.0}_{(1.0)}$ & $97.5_{(0.8)}$ & $0.5_{(1.1)}$ & $0.5_{(1.1)}$& $\bm{99.2}_{(0.8)}$& 8e6\\
\midrule
\multirow{6}{*}{\rotatebox[origin=c]{90}{SMACv2}}
& \textsl{protoss\_5\_vs\_5} & hetero. &$\bm{79.0}_{(3.4)}$  & $56.5_{(4.6)}$ & $54.2_{(2.7)}$ & $61.1_{(2.7)}$ & $32.5_{(6.8)}$ & $2.0_{(1.5)}$& $50.0_{(2.5)}$& 1e7 \\
& \textsl{terran\_5\_vs\_5} & hetero. &$\bm{74.4}_{(8.3)}$ & $50.5_{(2.7)}$ & $57.7_{(2.5)}$ & $57.0_{(3.8)}$ & $44.1_{(6.7)}$ & $7.0_{(2.2)}$& $55.8_{(3.6)}$& 1e7 \\
& \textsl{zerg\_5\_vs\_5} & hetero. &$\bm{63.4}_{(5.5)}$ & $42.3_{(1.4)}$ & $37.5_{(3.6)}$ & $41.5_{(4.4)}$ & $32.8_{(5.5)}$ & $4.5_{(0.9)}$& $42.5_{(1.4)}$& 1e7\\
& \textsl{protoss\_10\_vs\_10} & hetero. &$\bm{75.8}_{(3.9)}$ & $53.0_{(3.1)}$ & $38.0_{(4.0)}$ & $33.0_{(4.1)}$ & $26.0_{(3.4)}$ & $0.5_{(1.1)}$& $21.7_{(4.2)}$& 1e7 \\
& \textsl{terran\_10\_vs\_10} & hetero. &$\bm{75.0}_{(5.2)}$ & $40.0_{(5.2)}$ & $34.7_{(0.8)}$ & $37.6_{(3.3)}$ & $33.1_{(6.6)}$ & $2.5_{(1.4)}$& $17.5_{(3.8)}$& 1e7 \\
& \textsl{zerg\_10\_vs\_10} & hetero. &$\bm{62.9}_{(8.3)}$ & $39.8_{(3.0)}$ & $21.8_{(2.8)}$ & $32.0_{(2.2)}$ & $21.3_{(3.9)}$ & $1.0_{(0.6)}$& $17.5_{(5.2)}$& 1e7 \\
\bottomrule
\end{tabular}
\end{adjustbox}
\end{table*}

\subsection{Experimental Setup}
\paragraph{Baseline Methods}
\looseness-1
We compare MACA with SOTA multi-agent credit assignment approaches.
To promote fair comparison, we adopt all methods into the MAPPO learning framework.
We ensure the only difference among all methods is advantage function computation unless otherwise specified.
We use well-established implementations of MAPPO/IPPO/HAPPO \citep{zhong2023heterogeneous}, transformer encoder for MACA \citep{karpathy2022nanogpt,wen2022multi}, and
COMA/PPO-Mix/PPO-Sum \citep{papoudakis2020benchmarking}.
We include the following methods, whose corresponding advantage functions are summarized in \cref{tab:algo-overview}.

\textbf{MAPPO} optimizes all decentralized PPO \citep{schulman2017proximal} actors with a centralized critic.
MAPPO and IPPO are the SOTA on-policy MAPG algorithms in SMACv1 tasks \citep{yu2021surprising}.

\textbf{IPPO} (Independent PPO) directly adopts PPO that learns an individual critic for each actor.
IPPO performs competitively with MAPPO across different environments \citep{papoudakis2020benchmarking,yu2021surprising,de2020independent}.

\textbf{HAPPO} \citep{kuba2021trust} extends MAPPO by the sequential policy update scheme. 
Experimental results show competitive performance to MAPPO on SMACv1\&v2 \citep{zhong2023heterogeneous}.

\textbf{COMA} \citep{foerster2018counterfactual} performs counterfactual reasoning by marginalizing out the current agent's action in the baseline, as mentioned in previous sections.

\looseness-1
\textbf{PPO-Mix} is built upon the value decomposition method QMix \citep{rashid2018qmix} that learns the joint $Q$-value as a monotonic function of individual $Q$ values. PPO-Mix shares the same architecture as FACMAC \citep{peng2021facmac}, but the skeleton algorithm is the MAPPO rather than MADDPG to ensure a fair comparison.

\textbf{PPO-Sum} is identical to PPO-Mix, except the value decomposition method is VDN \citep{sunehag2017value}, which represents the joint value function as the sum of individual value functions. PPO-Sum is analogous to DOP \citep{wang2020off}, with MAPPO backbone instead of MADDPG for the same reason above.

\paragraph{Evaluation}
\looseness-1
We evaluate MACA on challenging benchmark StarCraft Multi-Agent Challenge (SMAC) \citep{samvelyan2019starcraft} (referred to as SMACv1 hereafter), and the recently proposed SMACv2 testbed \citep{ellis2022smacv2}.
We evaluate methods on five tasks from SMACv1, and six tasks from SMACv2,
covering a broad spectrum of agent types and task scenarios.
These tasks include both homogeneous and heterogeneous domains that ensure adequate coverage of task diversity and complexity, enabling a comprehensive empirical evaluation. 
We adopt the same hyperparameter settings from the original codebase and performed coarse finetuning, as detailed in 
\cref{app:hyperparam}.
We train each algorithm for 8M timesteps in SMACv1 tasks and 10M timesteps in SMACv2 tasks, with five random seeds per task.
During training, agents are evaluated every 160k timesteps with 40 independent runs.
We report the mean values and the standard deviation from evaluation win rates.
We also conduct experiments on three Multi-Agent Particle \citep{lowe2017multi,terry2021pettingzoo} (MPE) tasks following the same protocol.
We report the mean value and the standard deviation of evaluation episodic returns.

\begin{figure*}[!htp]
\centering
\includegraphics[width=\linewidth]{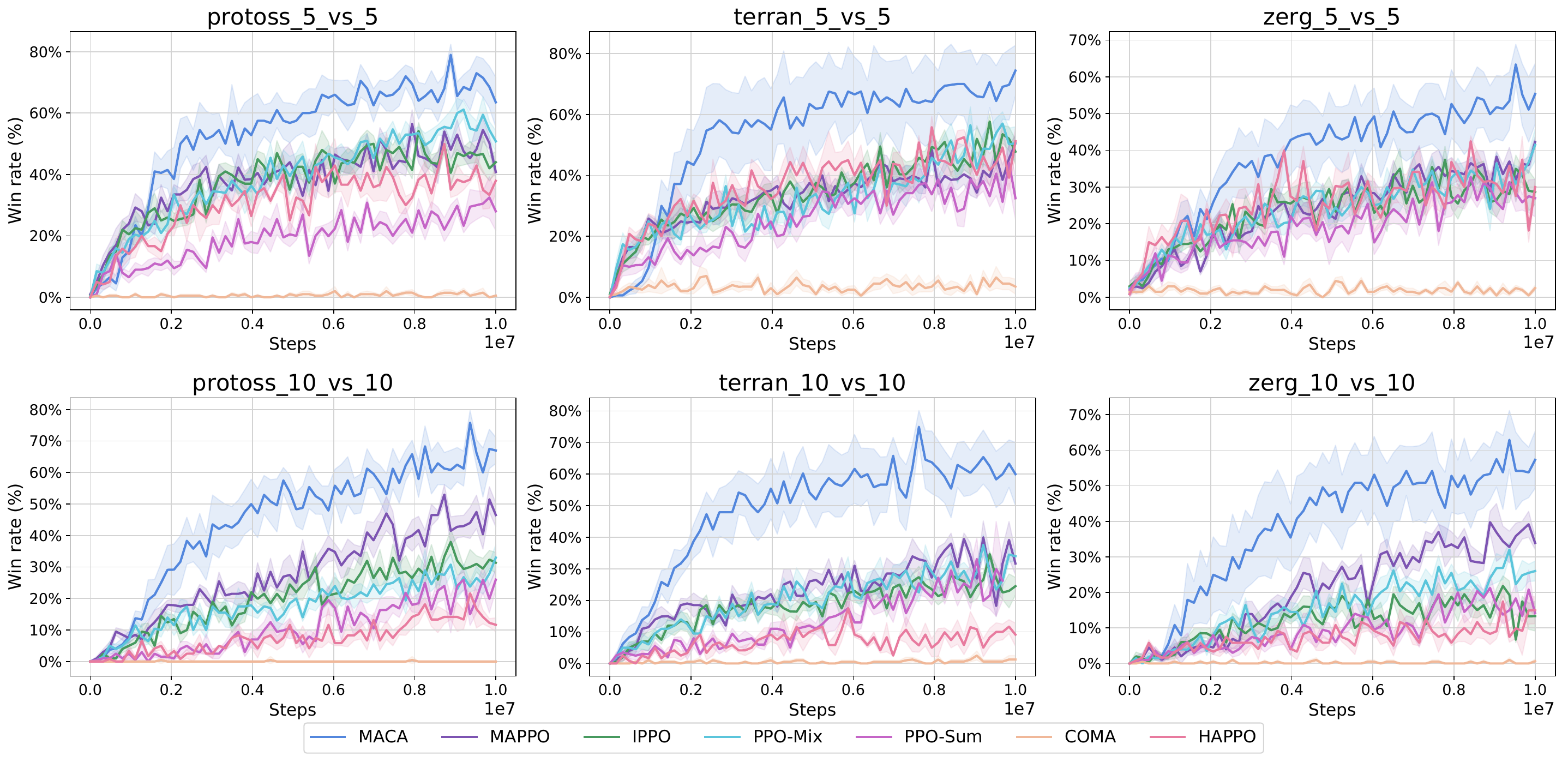}
\caption{Performance on the SMACv2 benchmark.}
\label{fig:smacv2_win_rate}
\end{figure*}

\subsection{Empirical Results}
In this section, we present and discuss evaluation results from experiments on SMACv1\&v2 testbeds.
\cref{tab:smac_win_rate} summarizes the evaluation win rates for all methods.
To determine the best performances, we perform a two-sample t-test \citep{snedecor1980statistical} with a significance level of 0.05 between the algorithm with the maximum mean and each of the other algorithms in every task.
Win rates that are not significantly different from the best performance are marked in bold.
\cref{fig:smacv2_win_rate} shows the learning progress of all methods in SMACv2 benchmarks.
We present learning curves in SMACv1 benchmarks, MPE results, and additional discussions in 
\cref{app:additional-results}. 


\looseness-1
Results in \cref{tab:smac_win_rate} and \cref{fig:smacv2_win_rate} show that MACA reaches stronger overall performance compared to the SOTA in SMACv1 benchmark, and superior performance in more challenging SMACv2 benchmarks.
Particularly in SMACv2 where MACA significantly outperforms all other methods, MACA gains higher sample efficiency.
MACA improves much faster as learning progresses, while other methods tend to reach convergence.
Such sample efficiency gain leads to higher final win rates.
Although MAPPO and PPO-Mix also demonstrate strong performance in two tasks, by our statistical test MACA still significantly outperforms both.

\looseness-1
In the SMACv1 benchmark, MACA achieves a superior overall performance.
Although MAPPO/IPPO/HAPPO performs competitively in the majority of tasks, they show inferior performance in certain tasks; thus MACA is significantly better than at least one of them in four out of five tasks.
Therefore, MACA demonstrates more \textit{robustness} than other methods in general.

\looseness-1
PPO-Mix generally performs well in SMACv2, but it occasionally fails in some SMACv1 tasks, e.g. in \textsl{25m}.
PPO-Sum shows large variances in SMACv1 tasks, and in general downperforms PPO-Mix.

COMA empirically performs poorly across all tasks in both SMACv1\&v2.
Such failure is consistent with prior works \citep{papoudakis2020benchmarking,kuba2021settling,wang2022shaq}.
\citet{wang2022shaq} argues COMA's poor performance may be due to relative overgeneralization, a common game theoretic pathology that the suboptimal actions are preferred \citep{wei2018multiagent}.

\begin{table*}[!htp]
\small
\centering
\caption{Mean evaluation win rate and standard deviation for MACA and its ablation variants.}
\label{tab:smac-ablation}
\begin{adjustbox}{width=\textwidth,center}
\begin{tabular}{*{9}{c}}
\toprule
Task & MACA & MACA-Dec & MACA-Jnt & MACA-Cor & MACA-Ind & MACA-NoCor & MACA-NoInd & MACA-NoJnt \\
\midrule
\textsl{5m\_vs\_6m} & $87.0_{(2.0)}$ & $84.0_{(2.3)}$ & $79.2_{(3.0)}$ & $70.6_{(1.9)}$ & $0.9_{(0.6)}$ & $75.0_{(1.8)}$ & $78.1_{(1.9)}$ & $70.0_{(4.7)}$ \\
\textsl{protoss\_10\_vs\_10} & $75.8_{(3.9)}$ & $64.7_{(5.1)}$ & $55.8_{(15.2)}$ & $59.2_{(11.7)}$ & $0.3_{(0.9)}$ & $52.5_{(6.0)}$ & $61.9_{(6.7)}$ & $65.0_{(7.6)}$ \\
\bottomrule
\end{tabular}
\end{adjustbox}
\end{table*}

\begin{figure}[!htb]
\centering
    \begin{subfigure}{0.49\linewidth}
        \centering
        \includegraphics[width=\linewidth]{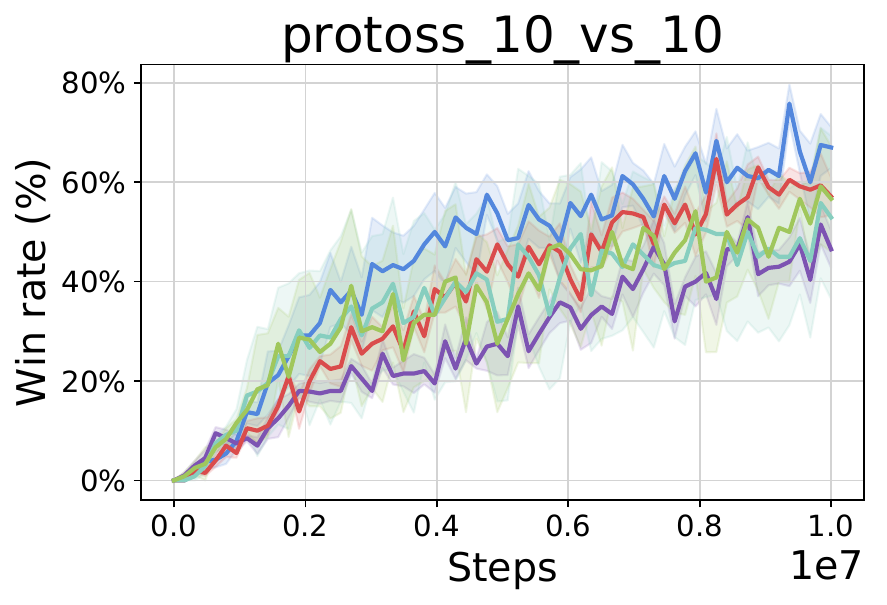}
    \end{subfigure}
    \begin{subfigure}{0.49\linewidth}
        \centering
        \includegraphics[width=\linewidth]{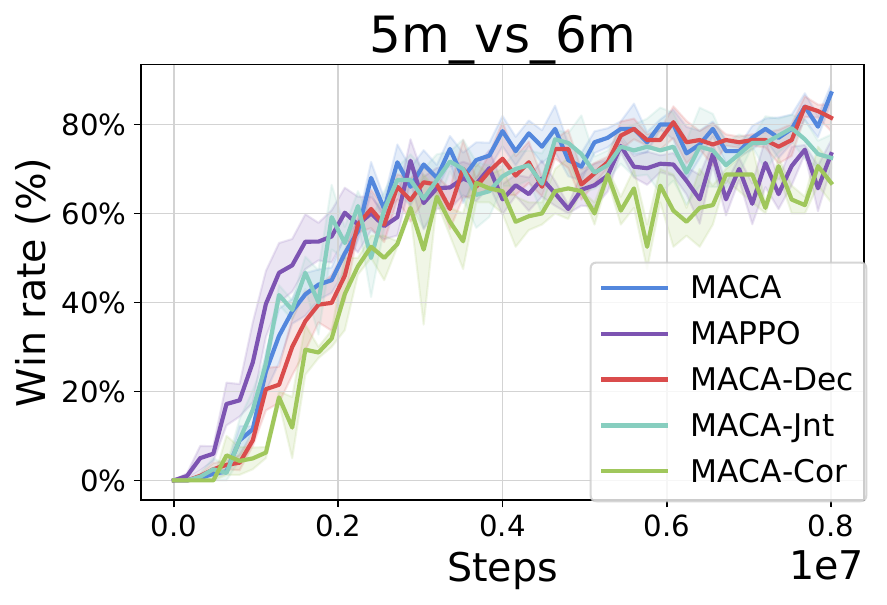}
    \end{subfigure}
    \begin{subfigure}{0.49\linewidth}
        \centering
        \includegraphics[width=\linewidth]{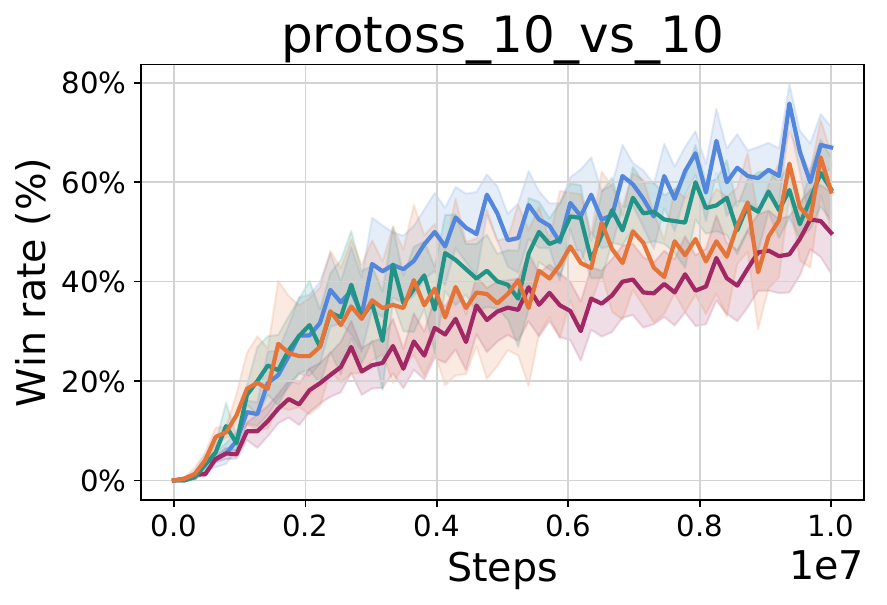}
    \end{subfigure}
    \begin{subfigure}{0.49\linewidth}
        \centering
        \includegraphics[width=\linewidth]{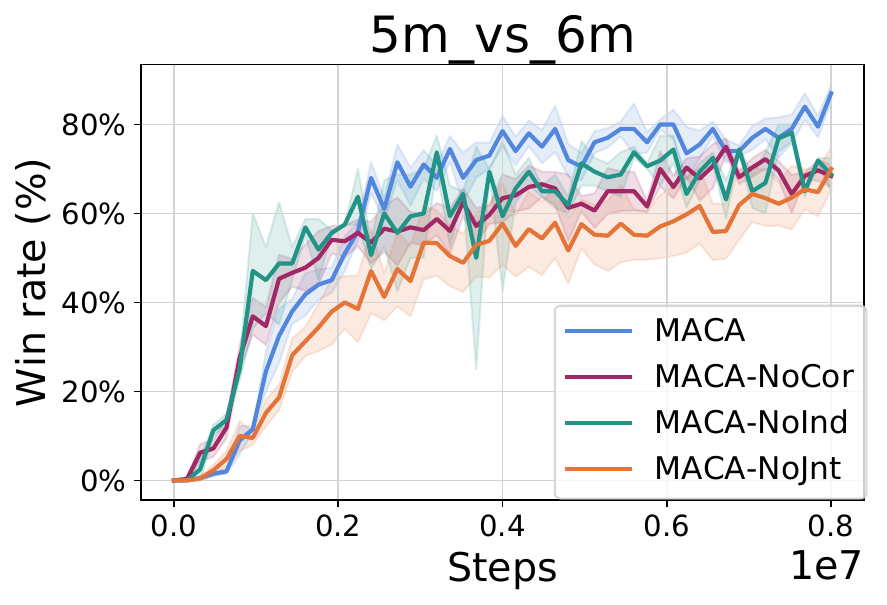}
    \end{subfigure}
\caption{Ablation performance.}
\label{fig:smac_win_rate_ablation}
\end{figure}

\vspace{-10pt}

\subsection{Ablations}\label{sec:experiment-ablation}
\looseness-1
In this section, we perform ablation experiments on MACA's key components.
Results are shown in \cref{tab:smac-ablation} and \cref{fig:smac_win_rate_ablation}.
We evaluate the following variants on \textsl{5m\_vs\_6m} and \textsl{protoss\_10\_vs\_10}.

\textit{MACA-Dec} is algorithmically equivalent to MACA, with the linear layer replaced with a transformer decoder \citep{vaswani2017attention} to perform value estimation.
A more detailed illustration is presented in 
\cref{fig:maca-architecture-decoder} in \cref{app:ablation}.
This variant aims to validate the expressivity of linear value estimation in MACA.
Results show that MACA is comparable with \textit{MACA-Dec} in both tasks, suggesting that the linear layer is capable of effectively estimating the value function.

\looseness-1
To ablate the importance of the MACA advantage, we remove different baseline components by enforcing their corresponding weighting coefficients to 0 and applying softmax to the remaining coefficients to obtain a valid categorical distribution.
\textit{MACA-Jnt} only uses the joint action set baseline $b^{\textsc{Jnt}}$.
Since this variant is equivalent to MAPPO with a transformer encoder-based critic, we attempt to verify (1) the effect of transformer encoder compared with MLP, and (2) the importance of multi-level advantage.
Results show that \textit{MACA-Jnt} reaches similar performance as MAPPO yet still downperforms MACA.
This demonstrates that improvement by only introducing a complex architecture is rather limited.
The MACA advantage plays a more important role in performance improvement.
\textit{MACA-Cor} only uses the \textit{CorrSet} baseline $b^{\textsc{Cor}}$.
This variant similarly downperforms MACA, highlighting the importance of the multi-level advantage.


\looseness-1
\textit{MACA-NoCor}, \textit{MACA-NoInd}, and \textit{MACA-NoJnt} remove the $b^{\textsc{Cor}}$, $b^{\textsc{Ind}}$, and $b^{\textsc{Jnt}}$, respectively,
seeking to evaluate individual components within the advantage baseline.
Results indicate that each advantage term is essential and contributes to performance improvement.

In addition, we conducted ablation experiments on \textit{MACA-Ind}, the variant that only keeps the $b^{Ind}$ baseline.
Similar to the equivalence between \textit{MACA-Jnt} and MAPPO, \textit{MACA-Ind} is essentially COMA, where the only difference is its attention-based critic. 
\textit{MACA-Ind} can also demonstrate the limited effect of only introducing the attention encoder, which aligns with conclusions from \textit{MACA-Jnt}. 
However, both COMA and \textit{MACA-Ind} empirically perform poorly, making the comparison not as informative as that between \textit{MACA-Jnt} and MAPPO.

\section{CONCLUSIONS}
\looseness-1
In this work, we tackle the credit assignment problem in cooperative MARL by considering different levels of credit assignment explicitly.
We formalize the per-level advantage that counterfactually deduces contributions over a specific level.
We propose MACA, an actor-critic method that constructs three different counterfactual advantage functions to respectively infer contributions from individual actions, joint actions, and actions taken by strongly correlated partners.
MACA leverages a transformer-based architecture to capture agents' correlations via the attention mechanism.
With multi-level contributions encoded by the combination of these advantages, MACA provides a generic formulation to address credit assignment challenges.
Empirical evaluations on challenging Starcraft benchmarks underscore MACA's superior performance, showcasing its efficacy in complex cooperative MARL scenarios.
Theoretical results provide support for MACA's strong performance.

In the future, it would be interesting to establish a connection between MACA and exploration methods via the advantage function, aiming to enhance performance across both regimes.

\textbf{Limitations.}
\label{subsec:limitations}
This work addresses the multi-agent credit assignment challenge in the cooperative setting, where agents receive collective rewards.
Other complex settings such as the mixed cooperative-competitive scenario could further complicate the problem, which is beyond the scope of this work.


\section*{Acknowledgments}
We appreciate the anonymous reviewers for their insightful comments and constructive suggestions that improved the quality of this manuscript.
We acknowledge the computational resources provided by Mila and the Digital Research Alliance of Canada.

\bibliography{maca}

\begin{thebibliography}{}

\bibitem[Abnar and Zuidema, 2020]{abnar2020quantifying}
Abnar, S. and Zuidema, W. (2020).
\newblock Quantifying attention flow in transformers.
\newblock {\em arXiv preprint arXiv:2005.00928}.

\bibitem[Albrecht et~al., 2023]{albrecht2023multi}
Albrecht, S.~V., Christianos, F., and Sch{\"a}fer, L. (2023).
\newblock Multi-agent reinforcement learning: Foundations and modern approaches.
\newblock {\em Massachusetts Institute of Technology: Cambridge, MA, USA}.

\bibitem[Ba et~al., 2016]{ba2016layer}
Ba, J.~L., Kiros, J.~R., and Hinton, G.~E. (2016).
\newblock Layer normalization.
\newblock {\em arXiv preprint arXiv:1607.06450}.

\bibitem[Baker et~al., 2019]{baker2019emergent}
Baker, B., Kanitscheider, I., Markov, T., Wu, Y., Powell, G., McGrew, B., and Mordatch, I. (2019).
\newblock Emergent tool use from multi-agent autocurricula.
\newblock {\em arXiv preprint arXiv:1909.07528}.

\bibitem[Bernstein et~al., 2002]{bernstein2002complexity}
Bernstein, D.~S., Givan, R., Immerman, N., and Zilberstein, S. (2002).
\newblock The complexity of decentralized control of markov decision processes.
\newblock {\em Mathematics of operations research}, 27(4):819--840.

\bibitem[Chen et~al., 2021]{chen2021decision}
Chen, L., Lu, K., Rajeswaran, A., Lee, K., Grover, A., Laskin, M., Abbeel, P., Srinivas, A., and Mordatch, I. (2021).
\newblock Decision transformer: Reinforcement learning via sequence modeling.
\newblock {\em Advances in neural information processing systems}, 34:15084--15097.

\bibitem[Chung et~al., 2021]{chung2021beyond}
Chung, W., Thomas, V., Machado, M.~C., and Le~Roux, N. (2021).
\newblock Beyond variance reduction: Understanding the true impact of baselines on policy optimization.
\newblock In {\em International Conference on Machine Learning}, pages 1999--2009. PMLR.

\bibitem[Dao et~al., 2022]{dao2022flashattention}
Dao, T., Fu, D., Ermon, S., Rudra, A., and R{\'e}, C. (2022).
\newblock Flashattention: Fast and memory-efficient exact attention with io-awareness.
\newblock {\em Advances in neural information processing systems}, 35:16344--16359.

\bibitem[de~Witt et~al., 2020]{de2020independent}
de~Witt, C.~S., Gupta, T., Makoviichuk, D., Makoviychuk, V., Torr, P.~H., Sun, M., and Whiteson, S. (2020).
\newblock Is independent learning all you need in the starcraft multi-agent challenge?
\newblock {\em arXiv preprint arXiv:2011.09533}.

\bibitem[Ellis et~al., 2022]{ellis2022smacv2}
Ellis, B., Cook, J., Moalla, S., Samvelyan, M., Sun, M., Mahajan, A., Foerster, J.~N., and Whiteson, S. (2022).
\newblock Smacv2: An improved benchmark for cooperative multi-agent reinforcement learning.
\newblock {\em arXiv preprint arXiv:2212.07489}.

\bibitem[Foerster et~al., 2018]{foerster2018counterfactual}
Foerster, J., Farquhar, G., Afouras, T., Nardelli, N., and Whiteson, S. (2018).
\newblock Counterfactual multi-agent policy gradients.
\newblock In {\em Proceedings of the AAAI conference on artificial intelligence}, volume~32.

\bibitem[Hansen et~al., 2019]{hansen2019pycma}
Hansen, N., Akimoto, Y., and Baudis, P. (2019).
\newblock {CMA-ES/pycma} on {G}ithub.
\newblock Zenodo, DOI:10.5281/zenodo.2559634.

\bibitem[H{\"u}ttenrauch et~al., 2017]{huttenrauch2017guided}
H{\"u}ttenrauch, M., {\v{S}}o{\v{s}}i{\'c}, A., and Neumann, G. (2017).
\newblock Guided deep reinforcement learning for swarm systems.
\newblock {\em arXiv preprint arXiv:1709.06011}.

\bibitem[Jensen, 1906]{jensen1906fonctions}
Jensen, J. L. W.~V. (1906).
\newblock Sur les fonctions convexes et les in{\'e}galit{\'e}s entre les valeurs moyennes.
\newblock {\em Acta mathematica}, 30(1):175--193.

\bibitem[Karpathy, 2023]{karpathy2022nanogpt}
Karpathy, A. (2023).
\newblock nanogpt.
\newblock \url{https://github.com/karpathy/nanoGPT}.

\bibitem[Konda and Tsitsiklis, 1999]{konda1999actor}
Konda, V. and Tsitsiklis, J. (1999).
\newblock Actor-critic algorithms.
\newblock {\em Advances in neural information processing systems}, 12.

\bibitem[Kuba et~al., 2021a]{kuba2021trust}
Kuba, J.~G., Chen, R., Wen, M., Wen, Y., Sun, F., Wang, J., and Yang, Y. (2021a).
\newblock Trust region policy optimisation in multi-agent reinforcement learning.
\newblock {\em arXiv preprint arXiv:2109.11251}.

\bibitem[Kuba et~al., 2021b]{kuba2021settling}
Kuba, J.~G., Wen, M., Meng, L., Zhang, H., Mguni, D., Wang, J., Yang, Y., et~al. (2021b).
\newblock Settling the variance of multi-agent policy gradients.
\newblock {\em Advances in Neural Information Processing Systems}, 34:13458--13470.

\bibitem[Kumar et~al., 2020]{kumar2020problems}
Kumar, I.~E., Venkatasubramanian, S., Scheidegger, C., and Friedler, S. (2020).
\newblock Problems with shapley-value-based explanations as feature importance measures.
\newblock In {\em International Conference on Machine Learning}, pages 5491--5500. PMLR.

\bibitem[Li et~al., 2021]{li2021shapley}
Li, J., Kuang, K., Wang, B., Liu, F., Chen, L., Wu, F., and Xiao, J. (2021).
\newblock Shapley counterfactual credits for multi-agent reinforcement learning.
\newblock In {\em Proceedings of the 27th ACM SIGKDD Conference on Knowledge Discovery \& Data Mining}, pages 934--942.

\bibitem[Li et~al., 2022]{li2022difference}
Li, Y., Xie, G., and Lu, Z. (2022).
\newblock Difference advantage estimation for multi-agent policy gradients.
\newblock In {\em International Conference on Machine Learning}, pages 13066--13085. PMLR.

\bibitem[Liu et~al., 2022]{liu2022stateful}
Liu, D., Shah, V., Boussif, O., Meo, C., Goyal, A., Shu, T., Mozer, M., Heess, N., and Bengio, Y. (2022).
\newblock Stateful active facilitator: Coordination and environmental heterogeneity in cooperative multi-agent reinforcement learning.
\newblock {\em arXiv preprint arXiv:2210.03022}.

\bibitem[Lowe et~al., 2017]{lowe2017multi}
Lowe, R., Wu, Y.~I., Tamar, A., Harb, J., Pieter~Abbeel, O., and Mordatch, I. (2017).
\newblock Multi-agent actor-critic for mixed cooperative-competitive environments.
\newblock {\em Advances in neural information processing systems}, 30.

\bibitem[Meng et~al., 2021]{meng2021offline}
Meng, L., Wen, M., Yang, Y., Le, C., Li, X., Zhang, W., Wen, Y., Zhang, H., Wang, J., and Xu, B. (2021).
\newblock Offline pre-trained multi-agent decision transformer: One big sequence model tackles all smac tasks.
\newblock {\em arXiv preprint arXiv:2112.02845}.

\bibitem[Nayak et~al., 2023]{nayak2023scalable}
Nayak, S., Choi, K., Ding, W., Dolan, S., Gopalakrishnan, K., and Balakrishnan, H. (2023).
\newblock Scalable multi-agent reinforcement learning through intelligent information aggregation.
\newblock In {\em International Conference on Machine Learning}, pages 25817--25833. PMLR.

\bibitem[Oliehoek and Amato, 2016]{oliehoek2016concise}
Oliehoek, F.~A. and Amato, C. (2016).
\newblock {\em A concise introduction to decentralized POMDPs}.
\newblock Springer.

\bibitem[Papoudakis et~al., 2020]{papoudakis2020benchmarking}
Papoudakis, G., Christianos, F., Sch{\"a}fer, L., and Albrecht, S.~V. (2020).
\newblock Benchmarking multi-agent deep reinforcement learning algorithms in cooperative tasks.
\newblock {\em arXiv preprint arXiv:2006.07869}.

\bibitem[Peng et~al., 2021]{peng2021facmac}
Peng, B., Rashid, T., Schroeder~de Witt, C., Kamienny, P.-A., Torr, P., B{\"o}hmer, W., and Whiteson, S. (2021).
\newblock Facmac: Factored multi-agent centralised policy gradients.
\newblock {\em Advances in Neural Information Processing Systems}, 34:12208--12221.

\bibitem[Rashid et~al., 2018]{rashid2018qmix}
Rashid, T., Samvelyan, M., Schroeder, C., Farquhar, G., Foerster, J., and Whiteson, S. (2018).
\newblock Qmix: Monotonic value function factorisation for deep multi-agent reinforcement learning.
\newblock In {\em International conference on machine learning}, pages 4295--4304. PMLR.

\bibitem[Richerson et~al., 2016]{richerson2016cultural}
Richerson, P., Baldini, R., Bell, A.~V., Demps, K., Frost, K., Hillis, V., Mathew, S., Newton, E.~K., Naar, N., Newson, L., et~al. (2016).
\newblock Cultural group selection plays an essential role in explaining human cooperation: A sketch of the evidence.
\newblock {\em Behavioral and Brain Sciences}, 39:e30.

\bibitem[Roesch et~al., 2020]{roesch2020smart}
Roesch, M., Linder, C., Zimmermann, R., Rudolf, A., Hohmann, A., and Reinhart, G. (2020).
\newblock Smart grid for industry using multi-agent reinforcement learning.
\newblock {\em Applied Sciences}, 10(19):6900.

\bibitem[Samvelyan et~al., 2019]{samvelyan2019starcraft}
Samvelyan, M., Rashid, T., De~Witt, C.~S., Farquhar, G., Nardelli, N., Rudner, T.~G., Hung, C.-M., Torr, P.~H., Foerster, J., and Whiteson, S. (2019).
\newblock The starcraft multi-agent challenge.
\newblock {\em arXiv preprint arXiv:1902.04043}.

\bibitem[Schulman et~al., 2017]{schulman2017proximal}
Schulman, J., Wolski, F., Dhariwal, P., Radford, A., and Klimov, O. (2017).
\newblock Proximal policy optimization algorithms.
\newblock {\em arXiv preprint arXiv:1707.06347}.

\bibitem[Seraj et~al., 2022]{seraj2022learning}
Seraj, E., Wang, Z., Paleja, R., Patel, A., and Gombolay, M. (2022).
\newblock Learning efficient diverse communication for cooperative heterogeneous teaming.
\newblock Technical report, Sandia National Lab.(SNL-NM), Albuquerque, NM (United States).

\bibitem[Shalev-Shwartz et~al., 2016]{shalev2016safe}
Shalev-Shwartz, S., Shammah, S., and Shashua, A. (2016).
\newblock Safe, multi-agent, reinforcement learning for autonomous driving.
\newblock {\em arXiv preprint arXiv:1610.03295}.

\bibitem[Shapley, 1953]{shapley1953value}
Shapley, L.~S. (1953).
\newblock A value for n-person games.
\newblock {\em Contribution to the Theory of Games}, 2.

\bibitem[Snedecor and Cochran, 1980]{snedecor1980statistical}
Snedecor, G.~W. and Cochran, W.~G. (1980).
\newblock Statistical methods. iowa.
\newblock {\em Iowa State University Press. Starkstein, SE, \& Robinson, RG (1989). Affective disorders and cerebral vascular disease. The British Journal of Psychiatry}, 154:170--182.

\bibitem[Son et~al., 2019]{son2019qtran}
Son, K., Kim, D., Kang, W.~J., Hostallero, D.~E., and Yi, Y. (2019).
\newblock Qtran: Learning to factorize with transformation for cooperative multi-agent reinforcement learning.
\newblock In {\em International conference on machine learning}, pages 5887--5896. PMLR.

\bibitem[Sunehag et~al., 2017]{sunehag2017value}
Sunehag, P., Lever, G., Gruslys, A., Czarnecki, W.~M., Zambaldi, V., Jaderberg, M., Lanctot, M., Sonnerat, N., Leibo, J.~Z., Tuyls, K., et~al. (2017).
\newblock Value-decomposition networks for cooperative multi-agent learning.
\newblock {\em arXiv preprint arXiv:1706.05296}.

\bibitem[Sutton et~al., 1999]{sutton1999policy}
Sutton, R.~S., McAllester, D., Singh, S., and Mansour, Y. (1999).
\newblock Policy gradient methods for reinforcement learning with function approximation.
\newblock {\em Advances in neural information processing systems}, 12.

\bibitem[Terry et~al., 2021]{terry2021pettingzoo}
Terry, J., Black, B., Grammel, N., Jayakumar, M., Hari, A., Sullivan, R., Santos, L.~S., Dieffendahl, C., Horsch, C., Perez-Vicente, R., et~al. (2021).
\newblock Pettingzoo: Gym for multi-agent reinforcement learning.
\newblock {\em Advances in Neural Information Processing Systems}, 34:15032--15043.

\bibitem[Tumer and Agogino, 2007]{tumer2007distributed}
Tumer, K. and Agogino, A. (2007).
\newblock Distributed agent-based air traffic flow management.
\newblock In {\em Proceedings of the 6th international joint conference on Autonomous agents and multiagent systems}, pages 1--8.

\bibitem[Vaswani et~al., 2017]{vaswani2017attention}
Vaswani, A., Shazeer, N., Parmar, N., Uszkoreit, J., Jones, L., Gomez, A.~N., Kaiser, {\L}., and Polosukhin, I. (2017).
\newblock Attention is all you need.
\newblock {\em Advances in neural information processing systems}, 30.

\bibitem[Wang et~al., 2020a]{wang2020qplex}
Wang, J., Ren, Z., Liu, T., Yu, Y., and Zhang, C. (2020a).
\newblock Qplex: Duplex dueling multi-agent q-learning.
\newblock {\em arXiv preprint arXiv:2008.01062}.

\bibitem[Wang et~al., 2022]{wang2022shaq}
Wang, J., Zhang, Y., Gu, Y., and Kim, T.-K. (2022).
\newblock Shaq: Incorporating shapley value theory into multi-agent q-learning.
\newblock {\em Advances in Neural Information Processing Systems}, 35:5941--5954.

\bibitem[Wang et~al., 2020b]{wang2020off}
Wang, Y., Han, B., Wang, T., Dong, H., and Zhang, C. (2020b).
\newblock Off-policy multi-agent decomposed policy gradients.
\newblock {\em arXiv preprint arXiv:2007.12322}.

\bibitem[Weaver and Tao, 2013]{weaver2013optimal}
Weaver, L. and Tao, N. (2013).
\newblock The optimal reward baseline for gradient-based reinforcement learning.
\newblock {\em arXiv preprint arXiv:1301.2315}.

\bibitem[Wei et~al., 2018]{wei2018multiagent}
Wei, E., Wicke, D., Freelan, D., and Luke, S. (2018).
\newblock Multiagent soft q-learning.
\newblock {\em arXiv preprint arXiv:1804.09817}.

\bibitem[Wen et~al., 2022]{wen2022multi}
Wen, M., Kuba, J.~G., Lin, R., Zhang, W., Wen, Y., Wang, J., and Yang, Y. (2022).
\newblock Multi-agent reinforcement learning is a sequence modeling problem.
\newblock {\em arXiv preprint arXiv:2205.14953}.

\bibitem[Wierstra and Schmidhuber, 2007]{wierstra2007policy}
Wierstra, D. and Schmidhuber, J. (2007).
\newblock Policy gradient critics.
\newblock In {\em European Conference on Machine Learning}, pages 466--477. Springer.

\bibitem[Wolpert and Tumer, 2001]{wolpert2001optimal}
Wolpert, D.~H. and Tumer, K. (2001).
\newblock Optimal payoff functions for members of collectives.
\newblock {\em Advances in Complex Systems}, 4(02n03):265--279.

\bibitem[Yu et~al., 2021]{yu2021surprising}
Yu, C., Velu, A., Vinitsky, E., Wang, Y., Bayen, A., and Wu, Y. (2021).
\newblock The surprising effectiveness of ppo in cooperative, multi-agent games.
\newblock {\em arXiv preprint arXiv:2103.01955}.

\bibitem[Zhang et~al., 2022]{zhang2022relational}
Zhang, F., Liu, B., Wang, K., Tan, V., Yang, Z., and Wang, Z. (2022).
\newblock Relational reasoning via set transformers: Provable efficiency and applications to marl.
\newblock {\em Advances in Neural Information Processing Systems}, 35:35825--35838.

\bibitem[Zhang et~al., 2018]{zhang2018fully}
Zhang, K., Yang, Z., Liu, H., Zhang, T., and Basar, T. (2018).
\newblock Fully decentralized multi-agent reinforcement learning with networked agents.
\newblock In {\em International Conference on Machine Learning}, pages 5872--5881. PMLR.

\bibitem[Zhong et~al., 2023]{zhong2023heterogeneous}
Zhong, Y., Kuba, J.~G., Hu, S., Ji, J., and Yang, Y. (2023).
\newblock Heterogeneous-agent reinforcement learning.
\newblock {\em arXiv preprint arXiv:2304.09870}.

\bibitem[Zhou et~al., 2020]{zhou2020learning}
Zhou, M., Liu, Z., Sui, P., Li, Y., and Chung, Y.~Y. (2020).
\newblock Learning implicit credit assignment for cooperative multi-agent reinforcement learning.
\newblock {\em Advances in neural information processing systems}, 33:11853--11864.

\end{thebibliography}

\section*{Checklist}



 \begin{enumerate}

 \item For all models and algorithms presented, check if you include:
 \begin{enumerate}
   \item A clear description of the mathematical setting, assumptions, algorithm, and/or model. [Yes/No/Not Applicable] Yes. 
   \item An analysis of the properties and complexity (time, space, sample size) of any algorithm. [Yes/No/Not Applicable] Yes. 
   \item (Optional) Anonymized source code, with specification of all dependencies, including external libraries. [Yes/No/Not Applicable] Not Applicable.
 \end{enumerate}

 \item For any theoretical claim, check if you include:
 \begin{enumerate}
   \item Statements of the full set of assumptions of all theoretical results. [Yes/No/Not Applicable] Yes.
   \item Complete proofs of all theoretical results. [Yes/No/Not Applicable] Yes.
   \item Clear explanations of any assumptions. [Yes/No/Not Applicable] Yes.
 \end{enumerate}

 \item For all figures and tables that present empirical results, check if you include:
 \begin{enumerate}
   \item The code, data, and instructions needed to reproduce the main experimental results (either in the supplemental material or as a URL). [Yes/No/Not Applicable] Yes.
   \item All the training details (e.g., data splits, hyperparameters, how they were chosen). [Yes/No/Not Applicable] Yes.
         \item A clear definition of the specific measure or statistics and error bars (e.g., with respect to the random seed after running experiments multiple times). [Yes/No/Not Applicable] Yes.
         \item A description of the computing infrastructure used. (e.g., type of GPUs, internal cluster, or cloud provider). [Yes/No/Not Applicable] Yes.
 \end{enumerate}

 \item If you are using existing assets (e.g., code, data, models) or curating/releasing new assets, check if you include:
 \begin{enumerate}
   \item Citations of the creator If your work uses existing assets. [Yes/No/Not Applicable] Yes.
   \item The license information of the assets, if applicable. [Yes/No/Not Applicable] Yes.
   \item New assets either in the supplemental material or as a URL, if applicable. [Yes/No/Not Applicable] Yes.
   \item Information about consent from data providers/curators. [Yes/No/Not Applicable] Yes.
   \item Discussion of sensible content if applicable, e.g., personally identifiable information or offensive content. [Yes/No/Not Applicable] Not Applicable.
 \end{enumerate}

 \item If you used crowdsourcing or conducted research with human subjects, check if you include:
 \begin{enumerate}
   \item The full text of instructions given to participants and screenshots. [Yes/No/Not Applicable] Not Applicable.
   \item Descriptions of potential participant risks, with links to Institutional Review Board (IRB) approvals if applicable. [Yes/No/Not Applicable] Not Applicable.
   \item The estimated hourly wage paid to participants and the total amount spent on participant compensation. [Yes/No/Not Applicable] Not Applicable.
 \end{enumerate}

 \end{enumerate}

\appendix
\onecolumn
\aistatstitle{Multi-level Advantage Credit Assignment\\for Cooperative Multi-Agent Reinforcement Learning: \\
Supplementary Materials}
\section{Theoretical Results}\label{app:theoretical-results}

\begin{lemma}\label{thm:baseline-unbiasedness}
Action-independent baseline functions do not affect the bias of the policy gradient estimate in expectation, i.e.,
\begin{align*}
    g_b &= \EE_{s \sim d^\pi, a \sim \pi} \left[ b_i \nabla_{\theta_i} \log \pi_i(a_i|s) \right] =0
\end{align*}
for any agent $i \in \agentset$ if $b_i$ does not depend on $a_i$.

\end{lemma}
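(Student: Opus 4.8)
The plan is to exploit the factorization of the joint policy together with the score-function (log-derivative) identity. First I would condition on the state and decompose the joint action as $a = (a_i, a_{-i})$, writing the joint policy as the product $\pi(a|s) = \pi_i(a_i|s) \prod_{j \neq i} \pi_j(a_j|s)$. Because the baseline $b_i$ may depend on $s$ and on $a_{-i}$ but not on $a_i$, I would pull the expectation over $a_i$ innermost while holding $s$ and $a_{-i}$ fixed. Under this reordering the whole claim reduces to showing that for each fixed $s$,
\begin{align*}
    \EE_{a_i \sim \pi_i}\left[ \nabla_{\theta_i} \log \pi_i(a_i|s) \right] = 0 .
\end{align*}

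Second, I would apply the identity $\pi_i \nabla_{\theta_i}\log\pi_i = \nabla_{\theta_i}\pi_i$ (valid wherever $\pi_i > 0$) to convert the inner expectation into a sum of gradients of the policy probabilities:
\begin{align*}
    \sum_{a_i} \pi_i(a_i|s)\, \nabla_{\theta_i}\log\pi_i(a_i|s) = \sum_{a_i} \nabla_{\theta_i} \pi_i(a_i|s) .
\end{align*}
Third, I would interchange $\nabla_{\theta_i}$ and $\sum_{a_i}$ so that the right-hand side becomes $\nabla_{\theta_i} \sum_{a_i} \pi_i(a_i|s) = \nabla_{\theta_i} 1 = 0$, since $\pi_i(\cdot|s)$ is a normalized probability distribution. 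Substituting this back and taking the outer expectations over $s \sim d^\pi$ and $a_{-i} \sim \pi_{-i}$ — which merely integrate a quantity already equal to zero against $b_i$ — yields $g_b = 0$ for every agent $i \in \agentset$.

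The main obstacle is the interchange of differentiation and summation in the third step. For the finite action spaces considered in this work this is immediate, since a finite sum of differentiable functions may be differentiated term by term, and it is the operative case. More generally it requires a dominated-convergence-type argument (a uniformly integrable bound on $\nabla_{\theta_i}\pi_i$) to justify moving the gradient across the sum or integral; I would note that the continuous-action case follows under standard smoothness and integrability assumptions on $\pi_i$. A secondary point worth stating explicitly is that the argument only uses independence of $b_i$ from $a_i$ — dependence on $s$ and on the other agents' actions $a_{-i}$ is harmless — which is precisely why each $k$-level baseline $b^\textsc{CF}_i$, and hence the combined $b^\textsc{MACA}_i$, preserves unbiasedness.
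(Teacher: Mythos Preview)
Your proposal is correct and matches the paper's own proof essentially step for step: factor out the expectation over $a_i$ (since $b_i$ is independent of it), apply the score-function identity $\pi_i\nabla_{\theta_i}\log\pi_i=\nabla_{\theta_i}\pi_i$, swap the gradient with the sum/integral, and use normalization of $\pi_i(\cdot\mid s)$. The paper presents the continuous case and remarks that the discrete case is analogous, whereas you present the discrete case and remark that the continuous case follows under regularity conditions; otherwise the arguments are identical.
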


\begin{proof}
We assume continuous action space. Proof for the discrete case is analogous.
\begin{align*}
    g_b &= \EE_{s \sim d^\pi, a \sim \pi} \left[ b_i \nabla_{\theta_i} \log \pi_i(a_i|s) \right] \\
        &= \EE_{s \sim d^\pi, a_{-i} \sim \pi_{-i}} \left[ b_i \EE_{a_{i} \sim \pi_{i}} \left[ \nabla_{\theta_i} \log \pi_i(a_i|s) \right] \right].
\end{align*}
We have
\begin{align*}
    & \EE_{a_{i} \sim \pi_{i}} \left[ \nabla_{\theta_i} \log \pi_i(a_i|s) \right] = \int_{\actionspace_i} \pi_i(a_i|s) \, \nabla_{\theta_i} \log \pi_i(a_i|s) \, da_i \\
    &= \int_{\actionspace_i} \nabla_{\theta_i} \pi_i(a_i|s) \, da_i = \nabla_{\theta_i} \int_{\actionspace_i} \pi_i(a_i|s) \, da_i = \nabla_{\theta_i} 1 = 0
\end{align*}
which concludes the proof.

\end{proof}

\begin{theorem}[Minimum-variance baseline]\label{thm:min-var-baseline}
Adapted from single-agent results by \citet{chung2021beyond}. \\
The minimum-variance baseline for the MAPG estimator is
\begin{align*}
    b^*_i(s, a_{-\levelset_i}) &= \frac{\EE_{a_{\levelset_i}} \left[ Q(s,a) || \nabla_{\theta_i} \log \pi_i(a_i|s) ||^2 \right]}{\EE_{a_{\levelset_i}} \left[ || \nabla_{\theta_i} \log \pi_i(a_i|s) ||^2 \right]}
\end{align*}
if the baseline is conditioned on the state $s$ and actions by $-\levelset_i$, where $a_{\levelset_i} = \{a_j \sim \pi_j: j \in \levelset_i \}$, $\{i\} \subset \levelset_i \subset \agentset$, and $-\levelset_i = \agentset \setminus \levelset_i$.

\end{theorem}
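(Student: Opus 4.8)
The plan is to reduce this variance-minimization problem to a pointwise least-squares optimization. First I would write the per-agent MAPG estimator carrying a baseline as $\hat{g}_i = (Q(s,a) - b_i)\nabla_{\theta_i}\log\pi_i(a_i|s)$, and recall that for a vector-valued estimator the ``variance'' being minimized is the trace of its covariance, namely $\EE[\|\hat{g}_i\|^2] - \|\EE[\hat{g}_i]\|^2$. By \cref{thm:baseline-unbiasedness}, since $b_i$ is conditioned only on $s$ and $a_{-\levelset_i}$ and hence does not depend on $a_i$ (because $i \in \levelset_i$), the term $\EE[b_i \nabla_{\theta_i}\log\pi_i]$ vanishes and the mean $\EE[\hat{g}_i]$ is independent of the choice of baseline. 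Consequently, minimizing the variance over $b_i$ is equivalent to minimizing the second moment $\EE_{s\sim d^\pi, a\sim\pi}\left[(Q(s,a)-b_i)^2\|\nabla_{\theta_i}\log\pi_i(a_i|s)\|^2\right]$.

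Next I would exploit the conditioning structure via the tower property. Writing the expectation as an outer expectation over $(s, a_{-\levelset_i})$ and an inner conditional expectation over $a_{\levelset_i} = \{a_j \sim \pi_j : j \in \levelset_i\}$, and observing that for each fixed realization of $(s, a_{-\levelset_i})$ the baseline value $b_i(s, a_{-\levelset_i})$ is a free scalar, the objective decomposes into a nonnegatively-weighted integral of terms each depending on a single baseline value. It therefore suffices to minimize the inner conditional second moment pointwise for every $(s, a_{-\levelset_i})$.

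The core step is then a one-dimensional quadratic minimization. Differentiating the inner objective $\EE_{a_{\levelset_i}}[(Q(s,a)-b_i)^2 \|\nabla_{\theta_i}\log\pi_i\|^2]$ with respect to the scalar $b_i$ and setting the derivative to zero yields the first-order condition $\EE_{a_{\levelset_i}}[(Q(s,a)-b_i)\|\nabla_{\theta_i}\log\pi_i\|^2]=0$, which rearranges directly into the claimed ratio; the second derivative $2\,\EE_{a_{\levelset_i}}[\|\nabla_{\theta_i}\log\pi_i\|^2] \ge 0$ confirms this stationary point is the global minimizer rather than a maximizer or saddle.

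I expect the main obstacle to be careful bookkeeping rather than conceptual difficulty. I must track precisely which random variables the baseline is permitted to depend on, so that treating $b_i(s,a_{-\levelset_i})$ as a free constant inside $\EE_{a_{\levelset_i}}$ is legitimate (the baseline must be measurable with respect to $(s,a_{-\levelset_i})$ and independent of $a_{\levelset_i}$), and I must justify exchanging the derivative in $b_i$ with the inner expectation. This interchange requires standard regularity or dominated-convergence assumptions on $Q$ and the score $\nabla_{\theta_i}\log\pi_i$, which I would state explicitly, mirroring the single-agent treatment of \citet{chung2021beyond} from which the result is adapted.
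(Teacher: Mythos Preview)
Your proposal is correct and follows essentially the same route as the paper: reduce to minimizing the second moment by noting (via \cref{thm:baseline-unbiasedness}) that the mean is baseline-independent, condition on $(s,a_{-\levelset_i})$ to isolate a pointwise scalar problem, and solve the resulting quadratic by first-order conditions. The paper arrives at the same pointwise problem via the law-of-total-variance decomposition rather than the tower property, but the substantive calculation is identical; your explicit second-derivative check and the remarks on measurability and interchange are extra care the paper omits.
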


\begin{proof}
Assuming access to the Q-value for each state-action pair $Q(s, a)=Q^\pi(s, a)$, the gradient is given in 
\cref{eq:ma-pg} 
as $\nabla_{\theta_i} J(\theta) = \EE_{s \sim d^\pi, a \sim \pi} \left[ Q(s, a) \nabla_{\theta_i} \log \pi_i(a_i|s) \right] = \EE_{s \sim d^\pi, a \sim \pi} \left[ g_i \right]$, where the
estimator is $g_i = (Q(s, a) - b_i(s, a_{-\levelset_i})) \nabla_{\theta_i} \log \pi_i(a_i|s)$.
We have
\begin{align*}
    \Var_{a \sim \pi}\left[g_i\right] &= \underbrace{\Var_{a_{-\levelset_i}} \left[ \EE_{a_{\levelset_i}}  \left[g_i\right] \right]}_{\text{variance from given actions}} + \underbrace{\EE_{a_{-\levelset_i}} \left[\Var_{a_{\levelset_i}} \left[ g_i \right] \right]}_{\text{variance from unknown actions}}
\end{align*}
We now derive the baseline that minimizes the second term.
\begin{align*}
    \Var_{a_{\levelset_i}}(g_i) &= \EE_{a_{\levelset_i}} \left[ \|g_i \|^2 \right] - \| \EE_{a_{\levelset_i}} \left[ g_i \right] \|^2 \\
    &= \EE_{a_{\levelset_i}} \left[ \|g_i \|^2 \right] - \| \EE_{a_{\levelset_i}} \left[ Q(s, a) \nabla_{\theta_i} \log \pi_i(a_i|s) \right] \|^2
\end{align*}

\pagebreak 
The equality holds by proof of \cref{thm:baseline-unbiasedness}. 
Then we only need to consider the first item
\begin{align*} 
    \frac{\partial}{\partial b} \EE_{a_{\levelset_i}} \left[ \|g_i\|^2 \right] =& \frac{\partial}{\partial b} \EE_{a_{\levelset_i}} \left[ \| Q(s, a) \nabla \log \pi_i(a_i|s) \|^2 - 2 \cdot Q(s, a) b_i(s,a_{-\levelset_i}) \| \nabla \log \pi_i(a_i|s) \|^2 \right. \\
    & \left. +b_i(s,a_{-\levelset_i})^2 \| \nabla \log \pi_i(a_i|s) \|^2 \right] \\ 
    =& 2 \left( b_i(s,a_{-\levelset_i}) \cdot \EE_{a_{\levelset_i}} \left[ \| \nabla \log \pi_i(a_i|s) \|^2 \right] - \EE_{a_{\levelset_i}} \left[ Q(s, a) \| \nabla \log \pi_i(a_i|s) \|^2 \right] \right)
\end{align*}

The minimum variance can be obtained by setting the gradient above to 0, i.e.,
\begin{align*}
    b^*_i(s, a_{-\levelset_i}) &= \frac{\EE_{a_{\levelset_i}} \left[ Q(s,a) || \nabla_{\theta_i} \log \pi_i(a_i|s) ||^2 \right]}{\EE_{a_{\levelset_i}} \left[ || \nabla_{\theta_i} \log \pi_i(a_i|s) ||^2 \right]}.
\end{align*}

\end{proof}

\begin{lemma}\label{thm:min-var-vs-value}
Adapted from single-agent results by \citet{chung2021beyond}. \\
The minimum-variance baseline $b^*_i$ from \cref{thm:min-var-baseline} and the baseline $b_i = \EE{a_{\levelset_i}} [Q(s,a)]$ satisfies
\begin{align*}
    b_i &= b^*_i - \frac{\Cov_{a_{\levelset_i}} \left( Q(s,a), || \nabla_{\theta_i} \log \pi_i(a_i|s) ||^2 \right)}{\EE_{a_{\levelset_i}} \left[ || \nabla_{\theta_i} \log \pi_i(a_i|s) ||^2 \right]}.
\end{align*}

\end{lemma}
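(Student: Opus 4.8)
The plan is to derive the stated identity directly from the definition of covariance, with all expectations understood to be taken over the marginalized actions $a_{\levelset_i}$ (conditioned on the state $s$ and on $a_{-\levelset_i}$). The entire argument is a one-line algebraic rearrangement connecting the two baseline expressions, so I do not expect any genuine obstacle; the work is purely notational bookkeeping.

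First I would introduce shorthand to reduce clutter, writing $g_i := \| \nabla_{\theta_i} \log \pi_i(a_i|s) \|^2$, and recall the two quantities in play: by hypothesis $b_i = \EE_{a_{\levelset_i}}[Q(s,a)]$, while the minimum-variance baseline supplied by \cref{thm:min-var-baseline} is
\begin{align*}
    b^*_i &= \frac{\EE_{a_{\levelset_i}}\left[ Q(s,a)\, g_i \right]}{\EE_{a_{\levelset_i}}\left[ g_i \right]}.
\end{align*}
Next I would invoke the definition of covariance, namely $\Cov_{a_{\levelset_i}}(Q, g_i) = \EE_{a_{\levelset_i}}[Q\, g_i] - \EE_{a_{\levelset_i}}[Q]\,\EE_{a_{\levelset_i}}[g_i]$, and solve it for the cross-moment to obtain $\EE_{a_{\levelset_i}}[Q\, g_i] = \Cov_{a_{\levelset_i}}(Q, g_i) + \EE_{a_{\levelset_i}}[Q]\,\EE_{a_{\levelset_i}}[g_i]$.

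Finally I would substitute this expression for the numerator of $b^*_i$, split the resulting fraction into two terms, cancel $\EE_{a_{\levelset_i}}[g_i]$ in the second term, and identify $\EE_{a_{\levelset_i}}[Q] = b_i$ there. This yields $b^*_i = \Cov_{a_{\levelset_i}}(Q, g_i)/\EE_{a_{\levelset_i}}[g_i] + b_i$, and rearranging gives exactly the claimed identity $b_i = b^*_i - \Cov_{a_{\levelset_i}}(Q, g_i)/\EE_{a_{\levelset_i}}[g_i]$.

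The only points warranting care are that the two expectations in $b^*_i$ share the same marginalization over $a_{\levelset_i}$ (so the covariance identity applies under a single, consistent measure), and that $\EE_{a_{\levelset_i}}[g_i] > 0$ so the division is well defined; under a stochastic policy with nonvanishing score-function norm this holds. Beyond these, there is no substantive difficulty. This lemma also makes transparent the remark in \cref{sec:maca-adv-baseline} that $b^\textsc{CF}_i$ behaves as an optimistic baseline whenever $\Cov_{a_{\levelset_i}}(Q, g_i)$ is negative.
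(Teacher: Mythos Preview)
Your proposal is correct and essentially identical to the paper's own proof: both start from the expression for $b^*_i$ in \cref{thm:min-var-baseline}, invoke the definition of covariance to rewrite the cross-moment $\EE_{a_{\levelset_i}}[Q\,g_i]$, and arrive at $b^*_i = \Cov_{a_{\levelset_i}}(Q,g_i)/\EE_{a_{\levelset_i}}[g_i] + b_i$ before rearranging. The paper presents the same algebra via an add-and-subtract of $b_i$ rather than your direct substitution, but the content is the same; your added remarks on the positivity of $\EE_{a_{\levelset_i}}[g_i]$ and the consistency of the marginalization measure are reasonable and do not appear in the paper's version.
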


\begin{proof}
We have
\begin{align*}
    b^*_i(s, a_{-\levelset_i}) &= \frac{\EE_{a_{\levelset_i}} \left[ Q(s,a) || \nabla_{\theta_i} \log \pi_i(a_i|s) ||^2 \right]}{\EE_{a_{\levelset_i}} \left[ || \nabla_{\theta_i} \log \pi_i(a_i|s) ||^2 \right]} \\
    &= \frac{\EE_{a_{\levelset_i}} \left[ Q(s,a) || \nabla_{\theta_i} \log \pi_i(a_i|s) ||^2 \right]}{\EE_{a_{\levelset_i}} \left[ || \nabla_{\theta_i} \log \pi_i(a_i|s) ||^2 \right]} - b_i + b_i \\
    &= \frac{\EE_{a_{\levelset_i}} \left[ Q(s,a) || \nabla_{\theta_i} \log \pi_i(a_i|s) ||^2 \right] - \EE_{a_{\levelset_i}} \left[Q(s,a)\right] \EE_{a_{\levelset_i}}\left[ || \nabla_{\theta_i} \log \pi_i(a_i|s) ||^2 \right]}{\EE_{a_{\levelset_i}} \left[ || \nabla_{\theta_i} \log \pi_i(a_i|s) ||^2 \right]} + b_i \\
    &= \frac{\Cov_{a_{\levelset_i}} \left( Q(s,a), || \nabla_{\theta_i} \log \pi_i(a_i|s) ||^2 \right)}{\EE_{a_{\levelset_i}} \left[ || \nabla_{\theta_i} \log \pi_i(a_i|s) ||^2 \right]} + b_i
\end{align*}

\end{proof}

\begin{lemma}\label{thm:suboptimality}
Adapted from \citet{foerster2018counterfactual}.\\
For an actor-critic algorithm with a compatible TD(1) critic following a policy gradient
\begin{align*}
    g^k &= \EE_{s \sim d^\pi, a \sim \pi} \left[ \sum_i \nabla_{\theta^k} \log \pi_i(a_i|s) \left( Q(s,a)-b_i \right) \right]
\end{align*}
at each iteration $k$, where the baseline function $b_i$ does not depend on action $a_i$,
\begin{align*}
\lim \inf_k ||\nabla J|| = 0 \quad w.p. \; 1.
\end{align*}
\end{lemma}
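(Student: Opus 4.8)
The plan is to reduce this multi-agent convergence claim to the single-agent compatible actor-critic convergence theorem of \citet{konda1999actor} in three steps: first show the baseline term contributes nothing in expectation, then recast the joint policy-gradient update as a single stochastic ascent on $J(\theta)$, and finally invoke the stochastic-approximation argument verbatim.

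First I would establish that $g^k$ is an unbiased estimator of $\nabla J$. Splitting the update, $g^k = \EE_{s \sim d^\pi, a \sim \pi}[\sum_i \nabla_{\theta^k} \log \pi_i(a_i|s)\, Q(s,a)] - \EE_{s \sim d^\pi, a \sim \pi}[\sum_i \nabla_{\theta^k} \log \pi_i(a_i|s)\, b_i]$. By the MAPG theorem (\cref{eq:ma-pg}) the first term equals $\nabla_\theta J(\theta)$. By \cref{thm:baseline-unbiasedness}, each summand of the second term vanishes: since $b_i$ does not depend on $a_i$, conditioning the inner expectation on the remaining actions and integrating $a_i \sim \pi_i$ gives $\EE_{a_i}[\nabla_{\theta_i}\log\pi_i]=0$. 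Hence $g^k = \nabla_\theta J(\theta)$, so the baseline merely reshapes the variance of the stochastic gradient and never perturbs its mean. This is precisely where the multi-level structure enters: because each $k$-level baseline marginalizes out $a_i$, their convex combination $b^\textsc{MACA}_i$ is action-independent and thus qualifies as an admissible $b_i$.

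Second I would use the factorization $\pi = \prod_i \pi_i$, so that $\log \pi = \sum_i \log \pi_i$ and, with $\theta = (\theta_1,\dots,\theta_n)$ stacked, $\nabla_\theta \log \pi = \sum_i \nabla_{\theta_i} \log \pi_i$. This places the joint update into exactly the single-agent policy-gradient form, with the joint policy $\pi$ over the joint action $a$ playing the role of one agent. The compatible TD(1) critic then supplies an approximation of $Q(s,a)$ whose parametrization satisfies the compatibility condition of \citet{sutton1999policy}, ensuring that the function-approximation error is orthogonal to the actor's ascent direction. I would then appeal to the two-timescale stochastic-approximation analysis of \citet{konda1999actor}: under the same standing assumptions (bounded rewards, a Robbins--Monro step-size schedule, finite second moments of the gradient estimate, and Lipschitz-smooth $J$), the iterates satisfy $\liminf_k \|\nabla J(\theta^k)\| = 0$ with probability one, and since the baseline preserves the mean gradient this conclusion transfers directly to the baselined update $g^k$.

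The main obstacle is verifying that the multi-agent construction genuinely inherits the hypotheses of the single-agent theorem rather than just its conclusion: one must confirm that the TD(1) critic forming $Q(s,a)$ is compatible with the \emph{stacked} actor parametrization and that the variance injected (or removed) by $b^\textsc{MACA}_i$ keeps the second moment of $g^k$ finite, so that the convergence theorem of \citet{konda1999actor} applies without modification. Everything downstream of establishing unbiasedness and compatibility is a routine transcription of the single-agent proof.
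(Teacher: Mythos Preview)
Your proposal is correct and follows essentially the same route as the paper: invoke \cref{thm:baseline-unbiasedness} to drop the baselines, use the factorization $\pi=\prod_i\pi_i$ to collapse $\sum_i\nabla_\theta\log\pi_i$ into the single-agent gradient $\nabla_\theta\log\pi$, and then appeal directly to the convergence result of \citet{konda1999actor} under the standard compatibility and timescale assumptions. Your discussion of the remaining obstacles (compatibility with the stacked parametrization, finite second moments) is in fact more explicit than the paper's own proof, which simply lists the \citet{konda1999actor} hypotheses and defers to that work.
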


\begin{proof}
By \cref{thm:baseline-unbiasedness}, the per-agent baseline does not change the expected gradient, thereby not affecting the convergence.
Hence we have the policy gradient
\begin{align*}
    g &= \EE_{s \sim d^\pi, a \sim \pi} \left[ \sum_i \nabla_{\theta} \log \pi_i(a_i|s) \left( Q(s,a)-b_i \right) \right] \\
    &= \EE_{s \sim d^\pi, a \sim \pi} \left[ \sum_i \nabla_{\theta} \log \pi_i(a_i|s) Q(s,a) \right] \\
    &= \EE_{s \sim d^\pi, a \sim \pi} \left[ \nabla_{\theta} \log \prod_i \pi_i(a_i|s) Q(s,a) \right] \\
    &= \EE_{s \sim d^\pi, a \sim \pi} \left[ \nabla_{\theta} \log \pi(a|s) Q(s,a) \right]
\end{align*}
yielding the standard single-agent actor-critic policy gradient.

\citet{konda1999actor} prove that an actor-critic following this gradient converges to a local maximum of the expected return $J(\theta)$, given that:
\begin{enumerate}
    \item the policy $\pi$ is differentiable,
    \item the update timescales for $Q$ and $\pi$ are sufficiently slow, and that $\pi$ is updated sufficiently slower than $Q$, and
    \item $Q$ uses a representation compatible with $\pi$,
\end{enumerate}
amongst several further assumptions. 
The policy parameterization (i.e., the joint-action agent is decomposed into independent actors) is immaterial to convergence, as long as it remains differentiable. 
Note that a centralized critic with access to the global state is essential for this proof to hold.

\end{proof}

\section{Experiment Details}\label{app:additional-results}

\subsection{Experimental Setup}
SMACv1 contains a diverse set of battle scenarios in which a team of ally units aims to defeat the enemy team.
SMACv2 extends SMACv1 with procedurally generated scenarios, with team compositions and positions spawned according to different probabilities,
which introduces significantly more stochasticity that requires agents' generalization to unseen settings.
We follow the default reward setting, which returns positive rewards for damage dealt to the enemy, killing each enemy unit, and winning the battle.
Tasks in both benchmarks embody complicated multi-agent credit assignment challenges, due to diverse cooperative behaviours involved in defeating the opponent.
\cref{tab:smac_desc} provides an overview of SMAC v1\&v2 task scenarios.
\begin{table*}[t]
\small
\centering
\caption{SMAC v1\&v2 task scenarios with unit and task types.}
\label{tab:smac_desc}
\begin{adjustbox}{width=0.8\textwidth,center}
\begin{tabular}{*{5}{c}}
\toprule
& Task  & Ally Unit Types & Task Type  \\
\midrule
\multirow{7}{*}{\rotatebox[origin=c]{90}{SMAC}}
& \textsl{25m}  & Marine & homogeneous \\
& \textsl{5m\_vs\_6m}  & Marine & homogeneous \\
& \textsl{8m\_vs\_9m}  & Marine & homogeneous \\
& \textsl{10m\_vs\_11m}  & Marine & homogeneous \\
& \textsl{3s5z}  & Stalker,Zealot & \textbf{heterogeneous} \\
\midrule
\multirow{6}{*}{\rotatebox[origin=c]{90}{SMACv2}}
& \textsl{protoss\_5\_vs\_5}  & Stalker, Zealot, Colossus & \textbf{heterogeneous} \\
& \textsl{terran\_5\_vs\_5}  & Marine, Marauder, Medivac & \textbf{heterogeneous} \\
& \textsl{zerg\_5\_vs\_5}  & Zergling, Hydralisk, Baneling & \textbf{heterogeneous} \\
& \textsl{protoss\_10\_vs\_10}  & Stalker, Zealot, Colossus & \textbf{heterogeneous} \\
& \textsl{terran\_10\_vs\_10}  & Marine, Marauder, Medivac & \textbf{heterogeneous} \\
& \textsl{zerg\_10\_vs\_10}  & Zergling, Hydralisk, Baneling & \textbf{heterogeneous} \\
\bottomrule
\end{tabular}
\end{adjustbox}
\end{table*}

\subsection{Computational Requirements}\label{app:compute}
It is important to highlight that MACA utilizes the attention encoder module of the transformer, rather than the entire transformer architecture. 
Besides self-attention's internal computations (i.e., QKV projection, convex combination), other operations within MACA's critic, including policy distribution computation, value estimation, and weighted sum, are all linear operations.
MACA hence enjoys reasonable time and space complexity compared to other baseline methods.
In practice, MACA has memory and runtime on the same scale as other methods despite its transformer module, as shown in 
\cref{tab:algo-num-param-runtime}.
Moreover, MACA's attention-based critic only affects parameter updates, while the majority of time consumed lies in agent-environment interactions, where only the actor networks are involved.
As modern deep learning frameworks optimize the efficiency of attention (e.g. FlashAttention \citep{dao2022flashattention}), the use of attention would not bottleneck MACA's scalability. 

We conduct all our experiments entirely on CPUs in compute clusters with multiple nodes.
All experiments were run on a single CPU core.
The main CPU model types are AMD Rome 7532 @ 2.40 GHz 256M cache L3 and AMD Rome 7502 @ 2.50 GHz 128M cache L3.
Each SMACv1/SMACv2/MPE job respectively took 24/48/12 CPU hours.
The total number of CPU hours spent (including hyperparameter search) is 33,708.

\subsection{Additional Results and Discussions}
\begin{table*}[!htb]
\small
\centering
\caption{Mean evaluation episodic returns and standard deviation for different methods on MPE tasks.}
\label{tab:mpe_return}
\begin{adjustbox}{width=\textwidth,center}
\begin{tabular}{*{10}{c}}
\toprule
& Task & MACA & MAPPO & IPPO & PPO-Mix & PPO-Sum & COMA & HAPPO & Steps \\
\midrule
\multirow{3}{*}{\rotatebox[origin=c]{90}{MPE}}
& \textsl{Spread} & $-61.95_{(1.44)}$ & $-67.96_{(4.22)}$ & $-67.01_{(2.40)}$ & $-106.56_{(0.92)}$ & $-84.29_{(1.14)}$ & $-109.09_{(3.61)}$ & $-72.25_{(2.05)}$& 8e6 \\
& \textsl{Reference} & $-12.04_{(0.45)}$ & $-12.19_{(0.43)}$ & $-29.63_{(1.16)}$ & $-27.84_{(2.64)}$ & $-34.40_{(0.79)}$ & $-39.71_{(1.76)}$ & $-12.44_{(0.43)}$& 8e6 \\
& \textsl{Speaker Listener} & $-9.17_{(0.53)}$ & $-9.10_{(0.53)}$ & $-9.41_{(0.58)}$ & $-21.26_{(2.52)}$ & $-26.06_{(1.55)}$ & $-31.80_{(2.94)}$ & $-9.06_{(0.51)}$& 8e6 \\
\bottomrule
\end{tabular}
\end{adjustbox}
\end{table*}

\begin{figure*}[!htp]
\centering
\includegraphics[width=1.0\linewidth]{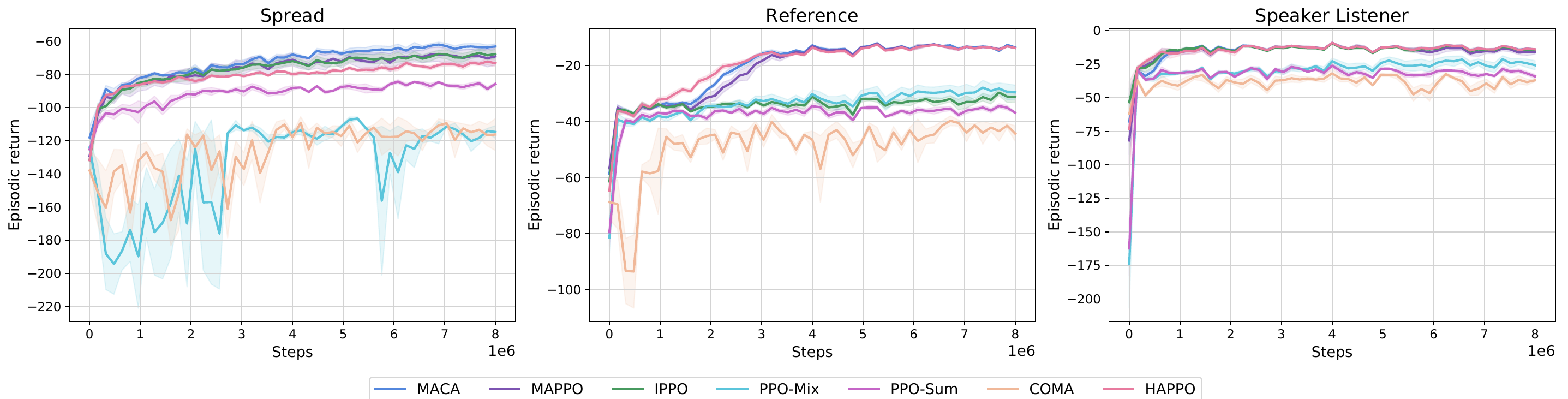}
\caption{Performance on the MPE benchmark.}
\label{fig:mpe_return}
\end{figure*}
\begin{figure*}[!htp]
\centering
\includegraphics[width=1.0\linewidth]{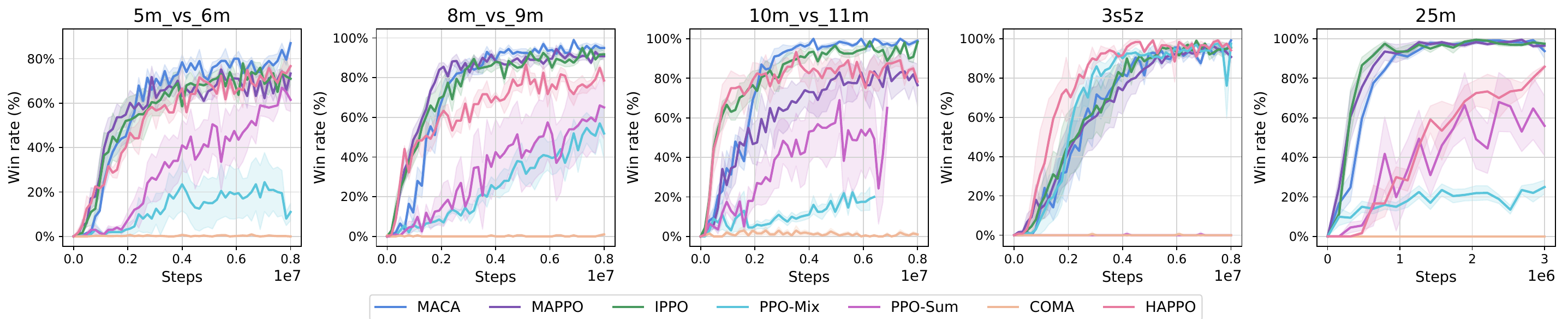}
\caption{Performance on the SMAC benchmark.}
\label{fig:smac_win_rate}
\end{figure*}



\looseness-1
On MPE tasks, multiple baseline methods exhibit strong performance. 
Despite continuous action space, MPE tasks are easy because they involve similar cooperation types and simple state-action spaces, leading to limited evaluation challenges.
MACA marginally outperforms other methods unlike in the more challenging SMAC environments.

\begin{table}[!htb] 
\centering
\small
\caption{Number of parameters in the critic model and experiment runtime. All values are reported in the case of \textsl{5m\_vs\_6m}. MACA has memory and runtime requirements on the same scale as other methods.}
\label{tab:algo-num-param-runtime}
\begin{tabular}{*{3}{c}}
\toprule
Algorithm & Number of parameters & Runtime (hours) \\
\midrule
MAPPO & 165,747 & 15.1 \\
IPPO & 149,497 & 14.8 \\
COMA & 174,966 & 18.6 \\
HAPPO & 165,747 & 15.9 \\
PPO-Mix & 361,349 & 18.5 \\
PPO-Sum & 134,148 & 17.4 \\
MACA & 239,674 & 16.7 \\
\bottomrule
\end{tabular}
\end{table}

\subsection{Visualization}
\begin{figure}[!htp] 
\centering
\resizebox{0.8\textwidth}{!}{
    \begin{subfigure}{0.45\linewidth}
        \centering
        \includegraphics[width=\linewidth]{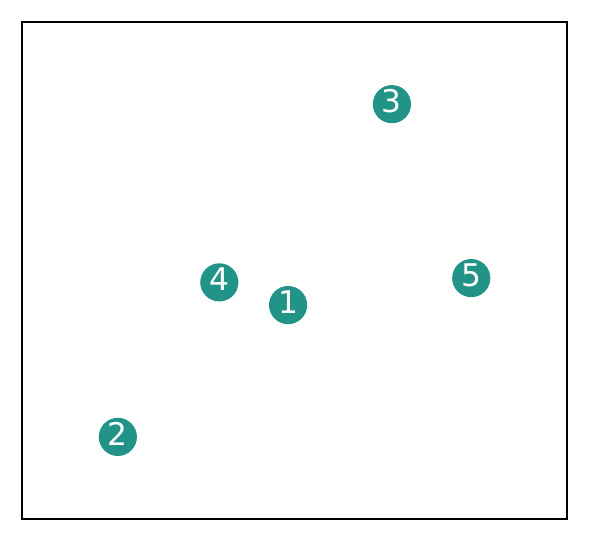}
    \end{subfigure}
    \begin{subfigure}{0.53\linewidth}
        \centering
        \includegraphics[width=\linewidth]{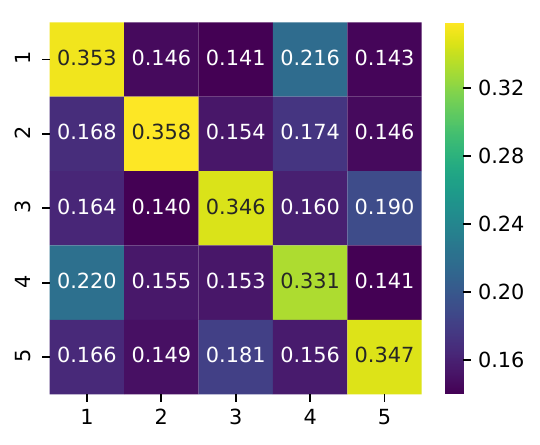}
    \end{subfigure}
}
\caption{Example visualized ally agents' coordinates (left) and corresponding attention weight (right) in \textsl{5m\_vs\_6m}. The attention matrix shows that agents 1 and 4 have high scores, which aligns with the task map where these agents are allied together.}
\label{fig:attn_visualize}
\end{figure}
We present an exemplary visualization of ally coordinates and corresponding attention weights in \cref{fig:attn_visualize},
which shows that the attention map reflects meaningful inter-agent correlations.

\subsection{Ablation Details}\label{app:ablation}
\begin{figure*}[!htp]
\centering
\includegraphics[width=0.9\linewidth]{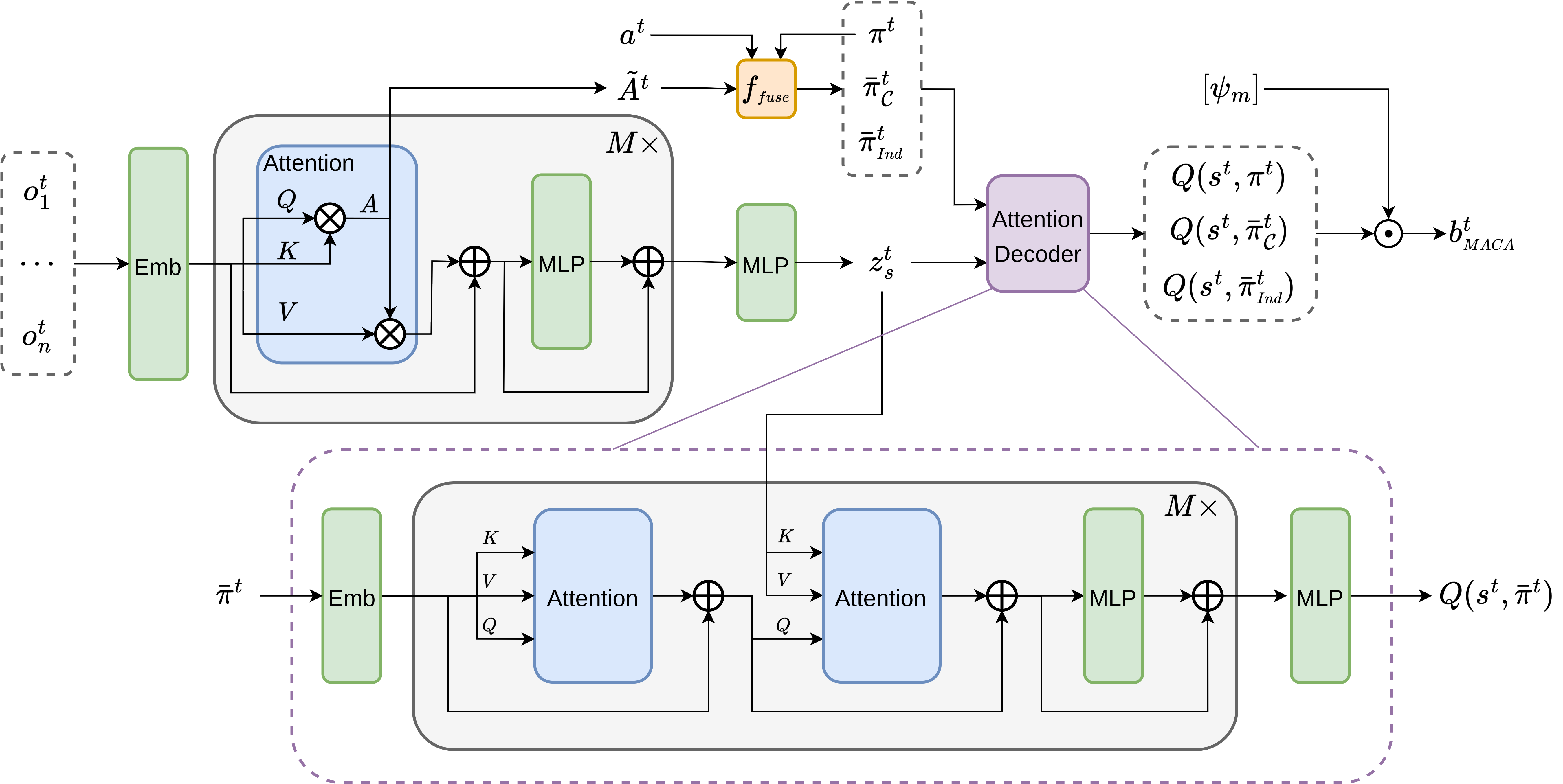}
\caption{MACA critic architecture with transformer decoder.}
\label{fig:maca-architecture-decoder}
\end{figure*}
\textit{MACA-Dec} replaces the linear layer with a transformer decoder to perform value estimation.
The architecture is illustrated in \cref{fig:maca-architecture-decoder}.
\section{Hyperparameters}\label{app:hyperparam}
\begin{table*}[!htp]
\small
\centering
\caption{Common hyperparameters across tasks.}
\label{tab:hyperparam-common}
\begin{tabular}{*{2}{c}}
\toprule
Hyperparameter & Value \\
\midrule
state type & EP \\
lr & 0.0005 \\
actor hidden sizes & [64,64,64] \\
critic hidden sizes & [128,128,128] \\
actor num mini batch & 1 \\
critic num mini batch & 1 \\
gamma &  0.99 \\
entropy coef & 0.01 \\
use valuenorm  & True  \\
use linear lr decay & False \\
use proper time limits &  True \\
activation  & ReLU  \\
use feature normalization  & True \\
initialization method  & orthogonal  \\
gain  & 0.01 \\
use naive recurrent policy  & False  \\
use recurrent policy & True \\
num GRU layers  & 1 \\
data chunk length  & 10  \\
optim eps  & 1e-5 \\
weight decay  & 0  \\
std x coef  & 1 \\
std y coef  & 0.5  \\
use clipped value loss  & True \\
value loss coef  & 1  \\
use max grad norm  & True \\
max grad norm  & 10.0  \\
use GAE  & True \\
GAE lambda &  0.95 \\ 
use huber loss  & True \\
use policy active masks  & True  \\
huber delta  & 10.0 \\
action aggregation  & prod \\ 
share param & True \\

\bottomrule
\end{tabular}
\end{table*}

\begin{table*}[!htp]
\small
\centering
\caption{Hyperparameters for MAPPO.}
\label{tab:hyperparam-mappo}
\begin{tabular}{*{3}{c}}
\toprule
Task & ppo/critic epoch & clip param \\
\midrule
SMAC & 10 & 0.1 \\
SMACv2 & 10 & 0.1 \\
MPE & 10 & 0.1 \\

\bottomrule
\end{tabular}
\end{table*}

\begin{table*}[!htp]
\small
\centering
\caption{Hyperparameters for IPPO.}
\label{tab:hyperparam-ippo}
\begin{tabular}{*{3}{c}}
\toprule
Task & ppo/critic epoch & clip param \\
\midrule
SMAC & 10 & 0.1 \\
SMACv2 & 10 & 0.1 \\
MPE & 10 & 0.1 \\

\bottomrule
\end{tabular}
\end{table*}

\begin{table*}[!htp]
\small
\centering
\caption{Common hyperparameters for MACA across tasks.}
\label{tab:hyperparam-common-maca}
\begin{tabular}{*{2}{c}}
\toprule
Hyperparameter & Value \\
\midrule
critic hidden sizes & [64,64,64] \\
n encode layer & 1 \\
n head & 1 \\
n embd & 64 \\
zs dim & 256 \\
bias & True \\
active fn & gelu \\
weight decay & 0.01 \\
betas & [0.9, 0.95] \\
weight init & TFixup \\
warmup epochs & 10 \\
v value loss coef & 1.0 \\
q value loss coef & 0.5 \\
att sigma $\sigma$ & $1.0/n$ \\

\bottomrule
\end{tabular}
\end{table*}

\begin{table*}[!htp]
\small
\centering
\caption{Hyperparameters for MACA.}
\label{tab:hyperparam-maca}
\begin{tabular}{*{3}{c}}
\toprule
Task & ppo/critic epoch & clip param \\
\midrule
SMAC & 10 & 0.05 \\
SMACv2 & 10 & 0.1  \\
MPE & 10 & 0.1 \\

\bottomrule
\end{tabular}
\end{table*}

\begin{table*}[!htp]
\small
\centering
\caption{Hyperparameters for PPO-Mix.}
\label{tab:hyperparam-ppo-mix}
\begin{tabular}{*{3}{c}}
\toprule
Task & ppo/critic epoch & clip param \\
\midrule
SMAC & 10 & 0.05 \\
SMACv2 & 10 & 0.05 \\
MPE & 10 & 0.05 \\

\bottomrule
\end{tabular}
\end{table*}

\begin{table*}[!htp]
\small
\centering
\caption{Hyperparameters for PPO-Sum.}
\label{tab:hyperparam-ppo-sum}
\begin{tabular}{*{3}{c}}
\toprule
Task & ppo/critic epoch & clip param \\
\midrule
SMAC & 10 & 0.05 \\
SMACv2 & 10 & 0.05 \\
MPE & 10 & 0.05 \\

\bottomrule
\end{tabular}
\end{table*}

\begin{table*}[!htp]
\small
\centering
\caption{Hyperparameters for COMA.}
\label{tab:hyperparam-coma}
\begin{tabular}{*{3}{c}}
\toprule
Task & ppo/critic epoch & clip param \\
\midrule
SMAC & 5 & 0.05 \\
SMACv2 & 5 & 0.05 \\
MPE & 5 & 0.05 \\

\bottomrule
\end{tabular}
\end{table*}

\begin{table*}[!htp]
\small
\centering
\caption{Hyperparameters for HAPPO.}
\label{tab:hyperparam-happo}
\begin{tabular}{*{3}{c}}
\toprule
Task & ppo/critic epoch & clip param \\
\midrule
SMAC & 10 & 0.1 \\
SMACv2 & 10 & 0.1 \\
MPE & 10 & 0.1 \\

\bottomrule
\end{tabular}
\end{table*}

\begin{table*}[!htp]
\small
\centering
\caption{Hyperparameter sweep for MACA. Results demonstrate MACA's robustness across ppo/critic epoch and clip hyperparameter settings in multiple representative environments.}
\label{tab:hyperparam-sweep-epoch-clip}
\begin{tabular}{*{5}{c}}
\toprule
epoch/clip & \textsl{25m} & \textsl{5m\_vs\_6m} & \textsl{protoss\_5\_vs\_5} & \textsl{protoss\_10\_vs\_10} \\
\midrule
10/0.05 & 99.3 (0.1) & 87.0 (2.0) & 76.1 (3.5) & 70.1 (2.6) \\
10/0.075 & 99.3 (0.1) & 85.5 (2.8) & 77.5 (4.8) & 72.4 (4.1) \\ 
10/0.1 & 99.5 (0.1) & 82.5 (4.1) & 79.0 (3.4) & 75.8 (3.9) \\
10/0.125 & 99.1 (0.0) & 80.6 (3.7) & 78.2 (2.7) & 74.6 (5.2) \\ 
5/0.05 & 99.3 (0.3) & 83.4 (2.3) & 70.6 (6.4) & 67.9 (6.0) \\ 
5/0.1 & 99.1 (0.1) & 80.2 (2.5) & 76.3 (5.6) & 71.5 (7.6) \\
\bottomrule
\end{tabular}
\end{table*}

\begin{table*}[!htp]
\small
\centering
\caption{Hyperparameter sweep for MACA. Results demonstrate MACA's robustness across thresholding hyperparameter settings in multiple representative environments.}
\label{tab:hyperparam-sweep-threshold}
\begin{tabular}{*{6}{c}}
\toprule
threshold $\sigma$ & 0.9/n & 0.95/n & 1.0/n & 1.05/n & 1.1/n \\
\midrule
\textsl{5m\_vs\_6m} & 86.5 (1.5) & 87.4 (1.2) & 87.0 (2.0) & 86.6 (0.9) & 85.7 (3.3) \\
\textsl{protoss\_10\_vs\_10} & 74.2 (4.4) & 75.3 (4.6) & 75.8 (3.9) & 74.7 (3.6) & 74.5 (5.0) \\
\bottomrule
\end{tabular}
\end{table*}

\end{document}